% Template for the submission to:
%   The Annals of Probability           [aop]
%   The Annals of Applied Probability   [aap]
%   The Annals of Statistics            [aos]
%   The Annals of Applied Statistics    [aoas]
%   Stochastic Systems                  [ssy]
%
%Author: In this template, the places where you need to add information
%        (or delete line) are indicated by {???}.  Mostly the information
%        required is obvious, but some explanations are given in lines starting
%Author:
%All other lines should be ignored.  After editing, there should be
%no instances of ??? after this line.

% use option [preprint] to =remove info line at bottom
% journal options: aop,aap,aos,aoas,ssy
% natbib option: authoryear
\documentclass[preprint]{imsart}
\usepackage[top=2.8cm, bottom=2.8cm, left=2.8cm, right=2.8cm]{geometry}
%\usepackage{amsthm,amsmath,natbib}

% provide arXiv number if available:
%\arxiv{arXiv:0000.0000}

% \usepackage{fullpage}
% \usepackage[margin=1.5cm]{geometry} % alternative to fullpage

\usepackage[utf8]{inputenc}
\usepackage{xr}
% \externaldocument{AoS-Mult-R-Q_supp-Rev}

\usepackage{tikz}

\usepackage{amsthm,amsmath,amssymb}
\usepackage{parskip}
\usepackage[authoryear]{natbib}
\usepackage{hypernat}
\usepackage{marvosym}
\usepackage[hang,small,bf]{caption}
\usepackage{mathtools}

\usepackage{hyperref}
\hypersetup{
    colorlinks,%
    citecolor=blue,%
    filecolor=black,%
    linkcolor=blue,%
    urlcolor=blue
}
\usepackage{hypernat}
\usepackage[none]{hyphenat}
\usepackage{enumerate}

%\setlength{\parskip}{0pt}
%\setlength{\parsep}{0pt}
%\setlength{\headsep}{0pt}
%\setlength{\topskip}{0pt}
%\setlength{\topmargin}{0pt}
%\setlength{\topsep}{0pt}
%\setlength{\partopsep}{0pt}

%\linespread{1}

\usepackage{color}
\usepackage{bbm}
\usepackage{thmtools} % required to make autoref work
\usepackage{xcolor}
\usepackage{subcaption}
\usepackage{textcomp}
\usepackage{graphicx}
\usepackage{float}
\usepackage{dirtytalk}
\usepackage{mathtools}
\usepackage{algpseudocode}
\usepackage{algorithm}

% \usepackage{lineno}
% \linenumbers
% use \linelabel and \getrefnumber

%\usepackage[numbers]{natbib}
\usepackage{tikz}
\usepackage{mathtools}
\mathtoolsset{showonlyrefs}

\sloppy

\newtheoremstyle{exampstyle}
{8pt} % Space above
{8pt} % Space below
{\it} % Body font
{} % Indent amount
{\bfseries} % Theorem head font
{.} % Punctuation after theorem head
{.5em} % Space after theorem head
{} % Theorem head spec (can be left empty, meaning `normal')

\theoremstyle{exampstyle}

\newtheorem{theorem}{Theorem}[section]
\newtheorem{proposition}[subsection]{Proposition}
\newtheorem{lemma}{Lemma}
\newtheorem{corollary}[theorem]{Corollary}

% \theoremstyle{definition}
% \newtheorem{example}[theorem]{Example}
% \newtheorem{definition}[theorem]{Definition}
%\declaretheorem[name={Definition}, style=definition, sibling=theorem,
%qed=$\blacksquare$]{definition}

% autoref naming

%\textwidth = 6.0 in 
%\textheight = 9.0 in 
%\oddsidemargin = 0.3 in 
%\topmargin = -0.2 in
%\evensidemargin = 0.3 in  
% \headheight = 0.2 in
%\headsep = 0.1in \parskip = 0.04in \parindent = 0.2in

% put your definitions there:
\startlocaldefs

%Comments

\newcommand{\eat}[1]{}

% Billy's brackets
% \parens*{xyz} is the same as \left( xyz \right)

% Other math things
% \DeclareMathOperator*{\argmin}{\arg\!\min}

% \DeclareMathOperator*{\argmax}{\arg\!\max}
\DeclareMathOperator*{\argmax}{\arg\hspace{3pt}max}

\renewcommand{\hat}[1]{\widehat{#1}}
\renewcommand{\tilde}[1]{\widetilde{#1}}

% vector

% \newcommand{\LSet}{\vec{L}}

% statistics

%

%

% \newcommand{\QVolume}[1]{\Delta^{(#1)}}

% estimation

% \newcommand{\LossTwo}{\tilde{\mathcal{L}}}

% indicator, sets

% \newcommand{\Rect}{\mathcal{R}}

% \newcommand{\IncJumpClass}{\rpc_\sparbas}

% \newcommand{\Qplain}{Q}
% \newcommand{\Qul}{\Qplain_1}
% \newcommand{\Qbr}{\Qplain_2}
% \newcommand{\Qur}{\Qplain_3}
% \newcommand{\Qbl}{\Qplain_4}

% \newcommand{\GapSet}{\mathcal{I}}
% \newcommand{\SumClass}{\mathcal{S}}
% \newcommand{\LClass}{\mathcal{I}}

% Matrix

% misc

%%%%%%%%%%%
% theorem definition etc. handling: Get the numeration right
%%%%%%%%%%%
%\newtheorem{prop}{Proposition}[section]
%\theoremstyle{plain}
% \newtheorem{defn}{Definition}[section]
%\theoremstyle{plain}
%\newtheorem{theorem}{Theorem}[section]
%\theoremstyle{plain}
%\newtheorem{lemma}{Lemma}[section]
%\theoremstyle{plain}
%\newtheorem{simulation}{Simulation}[section]
%\theoremstyle{plain}
%\newtheorem{cor}{Corollary}[section]
%\theoremstyle{plain}

\theoremstyle{plain}
%\newtheorem{example}{Example}[section]
%\theoremstyle{remark}
%\newtheorem{remark}{Remark}[section]
%\theoremstyle{remark}

%% Abbreviations
\def\beq{\begin{equation}}
\def\eeq{\end{equation}}
\def\ba{\begin{enumerate}[(a)]}
\def\bei{\begin{enumerate}[(i)]}
\def\be{\begin{enumerate}[(1)]}
\def\ee{\end{enumerate}}
\def\bi{\begin{itemize}}
\def\ei{\end{itemize}}
\def\beg{\begin{eg}}
\def\eeg{\end{eg}}
\def\bd{\begin{defn}}
\def\ed{\end{defn}}
\def\bt{\begin{thm}}
\def\et{\end{thm}}
\def\bl{\begin{lemma}}
\def\el{\end{lemma}}
\def\bfac{\begin{fact}}
\def\efac{\end{fact}}

\def\bc{\begin{corollary}}
\def\ec{\end{corollary}}
\def\bp{\begin{prop}}
\def\ep{\end{prop}}
\def\bo{\begin{observe}}
\def\eo{\end{observe}}
\def\bas{\begin{assumption}}
\def\eas{\end{assumption}}

\endlocaldefs

\begin{document}

\begin{frontmatter}
\title{Convex Smoothed Autoencoder-Optimal Transport model}

\runtitle{Convex Smoothed Autoencoder-Optimal Transport model}

\begin{aug}
\author{\fnms{Aratrika} \snm{Mustafi}\ead[label=e1]{am5322@columbia.edu}}
% \and
% \author{\fnms{Bodhisattva} \snm{Sen}\thanksref{t2}  \ead[label=e2]{bodhi@stat.columbia.edu}}
% \address{1255 Amsterdam Ave. \\ New York, NY 10027 \\ \printead{e3}}
\affiliation{
Department of Statistics, Columbia University%\thanksmark{a1} and
%Columbia University\thanksmark{a2}
}
\runauthor{Mustafi}

\address{Department of Statistics, Columbia University \\
\printead{e1} \\
%}
%\address{1255 Amsterdam Avenue \\
%New York, NY 10027\\
\vspace{10pt}
}
\end{aug}

\begin{abstract}
Generative modelling is a key tool in unsupervised machine learning which has achieved stellar success in recent years. Despite this huge success, even the best generative models such as Generative Adversarial Networks (GANs) and Variational Autoencoders (VAEs) come with their own shortcomings, mode collapse and mode mixture being the two most prominent problems. In this paper we develop a new generative model capable of generating samples which resemble the observed data, and is free from mode collapse and mode mixture. Our model is inspired by the recently proposed Autoencoder-Optimal Transport (AE-OT) model (\citet{AEOT}) and tries to improve on it by addressing the problems faced by the AE-OT model itself, specifically with respect to the sample generation algorithm. Theoretical results concerning the bound on the error in approximating the non-smooth Brenier potential by its smoothed estimate, and approximating the discontinuous optimal transport map by a smoothed optimal transport map estimate have also been established in this paper. 
\end{abstract}

% \begin{keyword}[class=MSC]
% \kwd[Primary ]{62G30, 62G20}
% \kwd[; secondary ]{60F15, 35J96}
% % \kwd{}
% % \kwd[; secondary ]{}
% \end{keyword}

% \begin{keyword}
% \kwd{Brenier-McCann's theorem}
% \kwd{convergence of subdifferentials of convex functions}
% \kwd{Glivenko-Cantelli type theorem}
% \kwd{Legendre-Fenchel dual}
% \kwd{local uniform rate of convergence}
% \kwd{semi-discrete optimal transport}
% \kwd{testing mutual independence}
% \kwd{two-sample goodness-of-fit testing}
% \end{keyword}

\end{frontmatter}

\section{Introduction}\label{sec:Intro}
The success of generative models in recent years has caused a paradigm shift in the field of machine learning. Generative modelling is one of the most important types of unsupervised learning, with recent applications in semi-supervised learning as well. It addresses the problem of probability density estimation, which is a core problem in unsupervised learning. Given training data, the primary goal of generative models is to generate new samples or observations having the same or approximately the same distribution as the training data. There are several different categories of generative models, each dealing with a different flavor of density estimation. Some generative models deal with explicit and tractable exact density estimation, such as fully visible belief networks (\citet{Frey}, \citet{Frey1998GraphicalMF}) and nonlinear independent components analysis (\citet{deco1995higher}, \citet{DinhNICE}, \citet{DinhRealNVP}). Some other models deal with explicit but approximate density estimation, such as Variational Autoencoders (\citet{KingmaVAEbook}, \citet{VAETutorial}, \citet{kingma2013fast}, \citet{rezende2014}, \citet{kingmaimproveVar}, \citet{chen2016variational}), Boltzmann Machines (\citet{MassParFahlman}, \citet{AckleyBoltz}, \citet{Hinton-84}, \citet{HintonSejnowski86}) and deep Boltzmann machines (\citet{salakhutdinovDeepBoltzmann}). Finally, some generative models are concerned with implicit density estimation, which are capable of sampling from the estimated probability density without explicitly estimating it. These encompass generative stochastic networks (\citet{bengio2014deep}) and, perhaps the most popular and widely successful generative model in recent years, Generative Adversarial Network (\citet{GoodfellowGAN}, \citet{goodfellow2016nips}, \citet{Radford2016UnsupervisedRL}, \citet{WGAN}, \citet{ImprovedWGANS}, \citet{StyleGAN}, \citet{PacGAN}, \citet{CycleGAN}, \citet{Pix2PixGAN}, \citet{StackGAN}).

In spite of the tremendous amount of success achieved by these generative models, in particular Variational Autoencoders (VAEs) and Generative Adversarial Networks (GANs), they are found to suffer from a few drawbacks. Most important among these drawbacks are mode collapse in GANs and mode mixtures in VAEs. Mode collapse is said to occur when the target distribution of samples is multimodal, but the sample generation procedure fails to produce any sample from one or more modal regions. For example, the MNIST dataset (\citet{MNISTdata}) contains black and white images of handwritten digits from 0 to 9, constituting 10 distinct classes or categories of observations. It is reasonable to believe that the distribution of these images will have 10 distinct modes corresponding to each category of images. When the generative model fails to produce samples corresponding to any particular category (say there are no samples containing the digit 6), an extreme form of mode collapse is said to occur. A slightly weaker form of mode collapse occurs when the proportion of generated samples corresponding to a particular mode is much smaller than the proportion of samples corresponding to the same mode in the observed data. On the other hand, mode mixture is said to occur when the target distribution has its support on a manifold with well-separated modal regions, but the generated samples lie in between these modal regions, corresponding to low probability zones of the target distribution. In most cases, such samples combine characteristics of samples belonging to the separate modes between which they lie, and thus are quite different from the observed data. In the case of MNIST data, a generated sample which looks like a combination of a 5 and 6 is a mixture between the two modes of the target distribution corresponding to the digits 5 and 6.

Recently, these shortcomings have been addressed using the theory of optimal transport in the paper \citet{modecollapsereg}. The generator function of a GAN can be viewed as a composition of an optimal transport map (with the noise distribution as source and a conceptual \say{latent code} distribution) with a decoder neural network. It is observed that this optimal transport map is discontinuous and it leads to the discontinuity of the generator function in GANs. This makes the generator function unfit for modelling using neural networks. Forcefully modelling such a discontinuous function using neural networks creates the problem of mode collapse in GANs. We elaborate on this optimal transport perspective of  GANs and its implications in Section \ref{subsec:GAN and the role of optimal transport theory in generative modelling}.

This viewpoint of GANs led to the development of a new generative model, Autoencoder-Optimal Transport (AE-OT) model, proposed in \citet{AEOT}. The AE-OT model comprises of an autoencoder. The encoder network of the autoencoder creates an empirical latent code distribution corresponding to the training data in a latent space, with the latent codes representing the essential features of the observations. An optimal transport map between a noise distribution and the empirical latent code distribution is computed, and then a continuous linear approximation of the optimal transport map is constructed. This continuous function coupled with the decoder neural network of the autoencoder serves the role of the generator function in this model. The model is described in detail in Section \ref{subsec:The AE-OT model}.

However, the AE-OT methodology also suffers from a few drawbacks of its own, and we attempt to understand and illustrate them. The optimal transport map between the noise distribution and the empirical latent code distribution is discontinuous, and maps every possible sample generated from the noise distribution to one of the latent codes corresponding to observed data, which in turn gets mapped to a sample exactly equal to an observed sample, if the autoencoder is trained sufficiently. To generate new samples similar to the observed data without exact reconstruction, the optimal transport map needs to be smoothed and made globally continuous. In the AE-OT model, this is achieved by a piecewise linear extension of the OT map, with the extended map having as its domain a simplicial complex obtained by triangulating the latent codes corresponding to observed data. Then, by a complicated procedure depending upon a user-specified parameter which is difficult to interpret and tune, the regions of discontinuity of the estimated optimal transport map, known as singularity sets, are estimated, and samples from the noise distribution which get mapped to singularity sets are rejected, since these samples are mixtures of modes of the distribution of the observed data. Thus, the AE-OT methodology involves a complicated, unintuitive and computationally expensive method of generating new samples based on triangulations, and wastes computational resources in generating a large number of potential samples which are ultimately discarded by the rejection sampling scheme employed within this methodology. The technical details regarding the triangulation of the latent codes, construction of the piecewise linear extension $\tilde{T}$ and singular set detection are described in detail in Section \ref{subsec:The AE-OT model}, and even more elaborately in \citet{AEOT}.

The main motivation behind the paper \citet{AEOT} is to tackle mode collapse and mode mixture problems in general generative models, not only for GANs, by providing a theoretical justification for these issues and developing a generative model capable of mitigating them. In this paper, we proceed one step further by addressing the drawbacks of AE-OT. We develop a generative model which modifies the generative module of AE-OT in order to improve the sample generation methods followed in \citet{AEOT}, based on ideas of convex smoothing proposed in \citet{nesterov1998introductory} and \citet{Convex}.

Our primary contributions in this paper are:  

\begin{itemize}
    \item We develop a generative model which produces good quality samples, in the sense that they resemble the observed data and do not suffer from mode collapse and mode mixture.
    \item We provide a theoretical validation for the efficacy of the convex smoothed AE-OT model by proving an uniform bound on the error of approximation of the optimal transport map between the noise distribution and the empirical latent code distribution, which serves as a measure of how closely the generated samples resemble the observed samples.
    % \item Regarding the optimal transport map between the noise distribution used in the sample generation process and the empirical distribution of latent codes corresponding to observed samples, a major component of the AE-OT model, we provide a theoretical proof to explain the effectiveness of the AE-OT model by showing that uniformly bounding the error of approximation of the Brenier potential automatically imposes an uniform bound on the error of approximation of the Optimal Transport map itself.
    \item We improve upon the sample generation method of the AE-OT model while developing our model by removing the need for rejection sampling, thus saving precious computational time and resources.
    \item In contrast to the method of latent vector generation in AE-OT, which ultimately produces linear combinations of encoded latent vectors corresponding to training data, our proposed method is not restricted to producing only linear combinations of latent vectors and potentially allows one to cover the entire manifold support of the distribution of latent vectors defined within the latent space corresponding to the autoencoder used.
    % \item In \citet{AEOT}, AE-OT deals with computation and representation of the exact semi-discrete optimal transport map between a continuous noise distribution and a discrete empirical distribution on observed latent codes. In our proposed model, this task is transformed to the computation and representation of an optimal transport map between two continuous distributions, the source distribution being the noise distribution as before but the target distribution is a sufficiently accurate continuous approximation of the discrete distribution. Optimal transport maps which arise in scenarios where both the source and target distributions are absolutely continuous, are continuous functions and are capable of being approximated to arbitrary accuracy by neural networks. In contrast to semi-discrete optimal transport maps which arise in scenarios where the source distribution is continuous and the target distribution is discrete are not continuous functions and are not capable of being accurately represented by neural networks directly.
    \item Our model is dependent upon an user specified parameter controlling the degree of accuracy of our method, ensuring the mitigation of mode-collapse and mode-collapse without exact reconstruction of the training data, and having the additional benefit of being more interpretable and easier to choose than the tuning parameter $\theta$, used in the AE-OT model for controlling the degree of mode-mixture.
    \item We propose a strategy for choosing the optimal value of the user-specified parameter based on a two-sample statistical test of equality of the distribution from which samples are generated and the true distribution of the data we intend to generate, based on the generated  and observed samples. We show that there is a trade-off between diversity in the generated samples and the degree of similarity between the generated samples and training samples, and our proposed strategy provides an optimal balance between the two extreme scenarios.
    % \item The advantage obtained through use of AE-OT i.e. removal of mode collapse and mode-mixture, is conserved by our proposed generative model. 
    % \item Our experimental results demonstrate the efficiency and efficacy of the proposed method on 2-dimensional simulated data as well as benchmark image datasets such as MNIST and Fashion MNIST. 
    % \item Our experimental results demonstrate the efficiency and efficacy of the proposed method on 2-dimensional simulated data.
    
\end{itemize}

The organization of the paper is as follows. Section \ref{sec:Intro} provides a brief introduction to generative modelling along with existing models in the literature, with particular focus on the Autoencoder-Optimal Transport model (\citet{AEOT}) and an overview of our contributions in this paper. Section \ref{sec:AE-OT to modified AE-OT Framework} begins with a primer on Generative adversarial networks (GANs) along with the relevant elements of optimal transport theory. We then proceed to discuss the problems faced by GANs, providing the motivation for the development of the AE-OT model. The AE-OT model is discussed next along with its drawbacks. We then discuss our novel contribution in the form of a sample generation procedure based on the idea of convex smoothing (\citet{nesterov1998introductory}, \citet{Convex}) as an alternative to the sample generation method of the AE-OT model, and propose the convex smoothed AE-OT model, along with relevant theoretical justifications. Section \ref{sec:Algorithm} provides the complete algorithm for constructing the convex smoothed AE-OT model and obtaining the generated samples based on input training data. Section \ref{sec:Theoretical validation of the Convex Smoothed AE-OT model} contains a theoretical proof validating the use of the convex smoothed AE-OT model. Section \ref{sec:Experimental results} provides the experimental results obtained on applying the convex smoothed AE-OT model to simulated 2 dimensional multimodal datasets. Section \ref{sec:Conclusion} includes a concluding discussion.

\section{AE-OT to Convex Smoothed AE-OT Framework}\label{sec:AE-OT to modified AE-OT Framework}

In the family of generative models, GANs are among the most successful, being able to generate highly realistic samples, especially in case of image data, and hence they serve as a reference model for sample generation problem. Recently, the theory of optimal transport has been used to provide us a deeper insight into the GAN paradigm.
%Recently the theory of optimal transportation has been used to arrive at a deeper understanding of GANs.

In the first two subsections, we will discuss the GAN paradigm from the optimal transport viewpoint and the difficulties faced by GANs, which provide us the motivation for developing the AE-OT model. Later, we propose our modification of the sample generation method, along with the complete sample generation algorithm as well as a procedure to choose the optimal level of approximation error to allow. 

\subsection{GAN and the role of optimal transport theory in generative modelling}\label{subsec:GAN and the role of optimal transport theory in generative modelling}

 Let $o_{1},o_{2},\dots,o_{n}$ denote $n$ observed data points (usually images) belonging to the image space or ambient space $\mathcal{X}$ and let $\eta$ be the the corresponding empirical distribution (true distribution). The manifold distribution hypothesis allows us to imagine a manifold $\Sigma$ within $\mathcal{X}$ on which the data/images reside, and $\eta$ is defined on the manifold support $\Sigma$.

A Generative Adversarial Network (GAN) consists of two components: A generator and a discriminator. We assume $x_{1},x_{2},\dots,x_{N}\sim \mu$ are samples generated from a tractable noise distribution $\mu$ (usually uniform or Gaussian) defined on a low-dimensional space $\mathcal{Z}$ (a latent space encoding latent features or essential characteristics of observed images). The generator neural network G, represented as a function $g_{\gamma}:\mathcal{Z} \rightarrow \mathcal{X}$, of a GAN transforms $x_{i}$'s into $g_{\gamma}(x_{i})$'s in $\mathcal{X}$ to generate new image samples having distribution $\zeta_{\gamma}$ i.e. $$x \sim \mu \rightarrow g_{\gamma}(x) \sim \zeta_{\gamma} \textrm{ where } x \in \mathcal{Z} \textrm{ and } g_{\gamma}(x) \in \mathcal{X} $$
Here $\gamma$ represents the neural network parameters corresponding to the generator network, and hence is used to parametrize both the generator function $g_{\gamma}$ and the empirical distribution of the generated samples $\zeta_{\gamma}$.
 The discriminator neural network D works as an adversary and attempts to discriminate between the generated image distribution $\zeta_{\gamma}$ and the true image distribution $\eta$, helping the generator network to learn from the training data. The Jenson Shannon divergence $JS(\eta ||\zeta_{\gamma})$ is used by discriminators in traditional GANs to measure the degree of dissimilarity between the two distributions (\citet{GoodfellowGAN}), while discriminators in Wasserstein GANs use the Wasserstein distance based on $L_{p}$ loss $W_{p}(\eta,\zeta_{\gamma})$ (\citet{arjovsky2017wasserstein}). The paper (\citet{geometricunderstanding}, \citet{geometricview}) shows that GANs try to learn the manifold $\Sigma$, together with the optimal transport map T between $\mu$ and $\eta$ using quadratic loss and a manifold parametrization $g$ which maps local coordinates in the latent space $\mathcal{Z}$ to the manifold $\Sigma$ within $\mathcal{X}$. 
 
 Following the papers (\citet{geometricunderstanding}, \citet{geometricview}), the GAN model can be understood, in principle, to accomplish two major tasks:
 \begin{enumerate}
     \item manifold learning, discovering the manifold structure of the data
     \item probability transformation, transforming a white noise to the data distribution.
 \end{enumerate}
 
 Accordingly, the generator map $g_{\gamma}:(\mathcal{Z}, \mu) \rightarrow\left(\Sigma, \zeta_{\gamma}\right)$ can be further decomposed into two steps,
$$
g_{\gamma}:(\mathcal{Z}, \mu) \stackrel{T}{\rightarrow}(\mathcal{Z}, \rho) \stackrel{g}{\rightarrow}\left(\Sigma, \zeta_{\gamma}\right)
$$
where $T$ is a transportation map, maps the white noise $\mu$ to $\rho$ in the latent space $\mathcal{Z}, g$ is the manifold parametrization, maps local coordinates in the latent space to the manifold $\Sigma$. Specifically, $g$ gives a local chart of the data manifold $\Sigma$, $\rho=g^{-1}_{\#}\eta$ is determined by the real data distribution $\eta$ and the encoding map $g^{-1}$ and $T$ realizes the probability measure transformation. Hence the generator $g_{\gamma}$ is equal to $g \circ T$. The goal of the GAN model is to find $g_{\gamma},$ such that the generated distribution $\zeta_{\gamma}$ fits the real data distribution $\eta,$ namely
$$
{g_{\gamma}}_{\#} \mu=\eta
$$
% Regularity Analysis for mode collapse:  By manifold structure assumption, the local chart representation $g: \mathcal{Z} \rightarrow \Sigma$ is continuous. Unfortunately, the continuity of the transportation map $T: \mu \rightarrow \rho$ can not be guaranteed. Even worse, according to the regularity theory of optimal transportation map, except very rare situations, the transportation map $T$ is always discontinuous. In more details, unless the support of $\rh$ is convex, there are non-empty singularity sets in the domain of $T,$ where $T$ is discontinuous. $\rho=\left(g^{-1}\right)_{\#} \eta$ is determined by the real data distribution $\eta$ and the encoding map $g^{-1},$ it is highly unlikely that the support of $\rho$ is convex.
 
%  The paper (\citet{geometricunderstanding}, \citet{geometricview}) shows that GANs try to learn the manifold $\Sigma$, together with the optimal transport map T between $\mu$ and $\eta$ using quadratic loss and a manifold parametrization $g$ which maps local coordinates in the latent space $\mathcal{Z}$ to the manifold $\Sigma$ within $\mathcal{X}$.
 
 Let $c: \mathcal{Z} \times \mathcal{Z} \rightarrow[0, \infty]$ be a measurable loss function: $c(x, y)$ represents the cost of transporting $x$ to $y$ where $x,y \in \mathcal{Z}$. For example, when $\mathcal{Z}=\mathbb{R}^{d},$ we can take $c$ to be the quadratic (or $L_{2}$ ) loss function
$$
c(x, y)=\|x-y\|^{2}
$$
The goal of optimal transport (Monge's problem) is to find a measurable transport map $T: \mathcal{X} \rightarrow \mathcal{Y}$ solving the (constrained) minimization problem
$$
\inf _{T} \int_{\mathcal{X}} c(x, T(x)) d \mu(x) \quad \text { subject to } \quad {T_{\#}}\mu=\rho
$$
where the minimization is over $T$ (a transport map), a measurable map from $\mathcal{Z}$ to $\mathcal{X}$, and ${T_{\#}}\mu$ is the push forward of $\mu$ by $T,$ i.e.,
$$
{T_{\#}}\mu(B)=\mu\left(T^{-1}(B)\right), \quad \text { for all } B \in \mathcal{X}.
$$

% Based on Cafarelli and Figalli's theory, (\cite{modecollapsereg}) T is a discontinuous optimal transport in most cases. This discontinuity occurs because the support of $\rho$ is non-convex or composed of disconnected components, and hence $g_{\gamma}$ is discontinuous. The implication of this fact is the key to understanding the difficulties faced by GANs. 
 
%  T is a semi-discrete optimal transport map (\citet{brunolevy}) since $\rho$ is the empirical discrete distribution of the latent features corresponding to the empirical discrete distribution of the observed data $\eta$, and $\mu$ typically is absolutely continuous. The generator $g_{\gamma}$ is equal to $g \circ T$.

\subsection{Problems with GANs: Motivation behind the AE-OT model}\label{subsec:Problems with GANs: Motivation behind the AE-OT model}

GAN training is tricky, unstable and sensitive to hyperparameters. More importantly, they suffer from mode collapse where they learn to generate samples from a subset of modes from among the entire collection of modes in the true data distribution $\eta$. Mode collapse is said to occur also when proportions of generated samples from different modes do not match with the proportions of images belonging to the different modes in $\eta$. In addition, mode mixture may also occur when generated samples fall outside the true data manifold $\Sigma$ in between modal regions. 

\citet{modecollapsereg} discusses the following theoretical reasons behind mode collapse and mode mixture. Brenier Theory gives us the following result:

\begin{theorem}\label{thm:Brenier}
Suppose $\mathcal{X}$ and $\mathcal{Y}$ are the Euclidean space $\mathbb{R}^{d}$ and the transportation cost is the quadratic Euclidean distance $c(x, y)=\|x-y\|^{2}$ for every $x \in \mathcal{X},y \in \mathcal{Y}$. Furthermore $\mu$ is absolutely continuous, and both $\mu$ and $\rho$ have finite second order moments, $\int_{\mathcal{X}}|x|^{2} d \mu(x)+\int_{\mathcal{Y}}|y|^{2} d \rho(y)<\infty,$ then there exists a convex function $u: X \rightarrow \mathbb{R},$ the so-called Brenier potential, its
gradient map $T=\nabla u$ gives the solution to the Monge's problem,
$$
{T_{\#}} \mu=\rho
$$
The Brenier potential is unique upto a constant, hence the optimal transportation map is unique.
\end{theorem}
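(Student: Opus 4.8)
The plan is to follow the classical route through the Kantorovich relaxation and cyclical monotonicity, rather than attacking Monge's problem directly. First I would replace the (possibly ill-posed) Monge problem by its Kantorovich relaxation: minimize $\int_{\mathcal{X}\times\mathcal{Y}} c(x,y)\,d\pi(x,y)$ over the set $\Pi(\mu,\rho)$ of couplings with marginals $\mu$ and $\rho$. Because $\mu$ and $\rho$ have finite second moments, this set is nonempty (the product coupling lies in it) and the cost is finite on it; by Prokhorov's theorem $\Pi(\mu,\rho)$ is weakly compact, and $\pi\mapsto\int c\,d\pi$ is weakly lower semicontinuous since $c$ is continuous and nonnegative, so an optimal coupling $\pi^*$ exists.

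The next step is to extract geometric structure from $\pi^*$. Using Kantorovich duality (or the more elementary ``improvement of cycles'' argument), I would show that $\operatorname{supp}(\pi^*)$ is a $c$-cyclically monotone set. Expanding the quadratic cost, $c(x,y)=\|x\|^2-2\langle x,y\rangle+\|y\|^2$, the terms depending only on $x$ or only on $y$ are fixed by the marginal constraints, so minimizing $\int c\,d\pi$ is equivalent to maximizing $\int\langle x,y\rangle\,d\pi$; hence $c$-cyclical monotonicity of $\operatorname{supp}(\pi^*)$ reduces to ordinary cyclical monotonicity. By Rockafellar's theorem, every cyclically monotone set is contained in the graph of the subdifferential $\partial u$ of some proper lower semicontinuous convex function $u:\mathcal{X}\to\mathbb{R}\cup\{+\infty\}$. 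Thus $(x,y)\in\operatorname{supp}(\pi^*)$ forces $y\in\partial u(x)$, and one checks $u$ is finite (hence locally Lipschitz) on a set of full $\mu$-measure.

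Now I would invoke the absolute continuity of $\mu$ to pass from the coupling to a map. A finite convex function on $\mathbb{R}^d$ is differentiable outside a Lebesgue-null set, so $\partial u(x)=\{\nabla u(x)\}$ is a singleton for $\mu$-a.e.\ $x$. Consequently $\pi^*$ is concentrated on the graph of $\nabla u$, i.e.\ $\pi^*=(\mathrm{id},\nabla u)_{\#}\mu$, which says exactly that $T:=\nabla u$ is a transport map with $T_{\#}\mu=\rho$, and it is optimal because $\pi^*$ is. For uniqueness I would take two optimal maps $T_1=\nabla u_1$, $T_2=\nabla u_2$; the half-sum of the induced plans is again optimal, hence again concentrated on a graph, which forces $T_1=T_2$ $\mu$-a.e.; then $\nabla(u_1-u_2)=0$ $\mu$-a.e., and convexity together with absolute continuity of $\mu$ yields $u_1-u_2=\mathrm{const}$, so the Brenier potential is unique up to an additive constant.

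The main obstacle I anticipate is precisely the passage from the Kantorovich coupling to an honest Monge map: this is where the absolute continuity hypothesis on $\mu$ is indispensable, and making it rigorous requires combining a.e.\ differentiability of convex functions with a careful argument that the optimal plan, constrained to lie inside $\partial u$, must collapse onto the single-valued part of $\partial u$. The cyclical-monotonicity/Rockafellar input is standard but still needs attention in the noncompact setting to guarantee that $u$ may be taken real-valued $\mu$-a.e.\ and that the duality/cycle-improvement step applies with finite second moments.
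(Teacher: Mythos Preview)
Your outline is the standard, correct route to Brenier's theorem: pass to the Kantorovich relaxation, use compactness of $\Pi(\mu,\rho)$ and lower semicontinuity to get an optimal plan, reduce $c$-cyclical monotonicity for the quadratic cost to ordinary cyclical monotonicity, invoke Rockafellar to produce a convex potential $u$, and then use absolute continuity of $\mu$ together with a.e.\ differentiability of convex functions to collapse the plan onto the graph of $\nabla u$. The uniqueness argument via averaging two optimal plans is also the usual one. The caveats you flag (finiteness of $u$ on a set of full $\mu$-measure, and the noncompact setting with only second moments) are exactly the places where care is needed, but they are all handled in the literature.

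However, there is nothing to compare against: the paper does not prove this statement. Theorem~\ref{thm:Brenier} is quoted as background from Brenier theory (attributed in the surrounding text to \citet{modecollapsereg}) and is used only to motivate the structure of the optimal transport map that the AE-OT framework exploits. The paper's own technical contribution is the later bound on $\|\nabla u_h - \nabla \widehat{u_h}\|_{L^2}$, not a proof of Brenier's theorem. So your sketch is fine as a proof of the cited result, but it goes well beyond what the paper itself supplies.
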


In case of GANs, we have $\mathcal{X}=\mathcal{Y}=\mathcal{Z}$. Further, since $\rho$ is a discrete distribution and $\mu$ is absolutely continuous, discrete Brenier theory under quadratic transportation cost can be used to show the existence of a convex piecewise linear continuous function $u$, the gradient of which is the optimal transport map $T = \nabla u$. Following Section 4 of \citet{modecollapsereg}, we can view this Brenier potential map $u$ geometrically as the upper envelope of a collection of hyperplanes (\citet{geometricview}) and $u$ can be parametrized uniquely upto an additive constant by a parameter $h$, referred to as the height vector (\citet{VarMinkowski}, \citet{AEOT}). $u$ is often referred to as $u_{h}$ using this parametrization. If $y_{1},y_{2},\dots,y_{n}$ are the latent codes obtained from the observed data $o_{1},o_{2},\dots,o_{n}$ using the inverse decoding function i.e. $g^{-1}(o_{i})=y_{i},i=1,2,\dots,n$, then $\rho=\frac{1}{n}\sum_{i=1}^{n}\delta_{y_{i}}$. In such a case, $u_{h}$ can be shown to be of the form
$$
u_{h}(x) = \max _{i=1}^{n}\left\{\pi_{h, i}(x)\right\}=\max _{i=1}^{n} \{{x^{T}}y_{i}+h_{i}\}
$$
where $\pi_{h, i}(x)={x^{T}}y_{i}+h_{i}$ is the hyperplane corresponding to $y_{i}$. Regularity theory of optimal transport given by Cafarelli and Figalli (\citet{modecollapsereg}) states that whenever the support of $\rho$ is non-convex or composed of disconnected components due to multimodality of $\rho$ (induced by multimodality of $\eta$), T is a discontinuous function, and hence $g_{\gamma}$ is discontinuous. Deep Neural Networks (DNN) can only model/approximate continuous functions and $g_{\gamma}$ lies outside the functional space represented using DNNs. This leads to the problems of unstable training, non-convergence of the training process, mode collapse and mode mixture. The regions where the transport map $T = \nabla u$ is discontinuous are referred to as singular sets, which are collection of points where $u$ has a non-unique sub-gradient. Singular sets are characterized by sharp ridges, indicated by large dihedral angles between adjoining hyperplanes. Latent vectors belonging to or close to singular sets correspond to mixtures between modes of the empirical latent code distribution $\rho$, which in turn correspond to images which are mixtures between modes of the distribution observed image distribution $\eta$. 

% Regularity theory of optimal transport given by Cafarelli and Figalli states that whenever the support $\Sigma$ of $\eta$ is non-convex or composed of disconnected components due to multimodality of $\eta$, T is a discontinuous function, and hence $g_{\gamma}$ is discontinuous. Deep Neural Networks (DNN) can only model/approximate continuous functions and $g_{\gamma}$ lies outside the functional space represented using DNNs. This leads to the problems of unstable training, non-convergence of the training process, mode collapse and mode mixture. One can view the Brenier potential map $u$ geometrically as the upper envelope of a collection of hyperplanes (\citet{geometricview}) and $u$ can be parametrized uniquely upto an additive constant by a parameter $h$, referred to as the height vector (\citet{VarMinkowski},\citet{AEOT}) and $u$ is often referred to as $u_{h}$ using this parametrization. The regions where the transport map $T = \nabla u$ is discontinuous are referred to as singular sets, which are collection of points where $u$ has a non-unique subgradient. Singular sets are characterized by sharp ridges, indicated by large dihedral angles between adjoining hyperplanes. Latent vectors belonging to or close to singular sets correspond to mixtures between modes of the empirical latent code distribution $nu$, which in turn correspond to images which are mixtures between modes of the distribution observed image distribution $\eta$. 

The AE-OT model (\citet{AEOT}) is motivated by these insights. It is proposed as a generative model free from the problems faced by GANs, yet able to generate new images which respect the diversity present in the real-life images and look realistic. 

\subsection{The AE-OT model}\label{subsec:The AE-OT model}

The AE-OT model has two major components (Fig. \ref{AE-OTmodel}) :

\begin{figure}[H]
\centering
\includegraphics[width=0.8\textwidth]{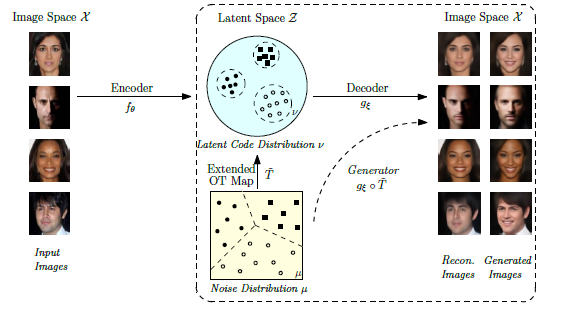}
\caption{AE-OT model (\citet{AEOT})}
\label{AE-OTmodel}
\end{figure}

\begin{enumerate}
    \item[i.] \textbf{Autoencoder} (AE) - An autoencoder is used for learning the data manifold $\Sigma$ in the image space $\mathcal{X}$. It learns the essential features of the data through dimensionality reduction. An autoencoder is composed two parts:
    \begin{itemize}
     \item[a.]An encoder network $\left(f_{\theta}\right)$ which encodes the data manifold from the image space $\mathcal{X}$ to the low-dimensional latent space $\mathcal{Z}$, and map the data distribution $\eta$ to the latent code distribution $\nu$ i.e.
     $$f_{\theta}:\mathcal{X}\rightarrow\mathcal{Z}$$ where $\theta$ represents the neural network parameters corresponding to the encoder network of the autoencoder.
     Both $\eta$ and $\nu$ are empirical discrete distributions of the form
     $$\eta=\frac{1}{n}\sum_{k=1}^{n} \delta_{o_{k}}\textrm{ and }\nu=\frac{1}{n}\sum_{k=1}^{n}\delta_{y_{k}}$$
    %  where $o_{k}\textrm{ is the }k\textrm{-th observed data point},k=1,2,\dots,n,y_{k}=f_{\theta}(o_{k})\in\mathcal{Z}\textrm{ is the latent representation of }o_{k},k=1,2,\dots,n\textrm{ and } \delta\textrm{ is the Dirac function.}$
     where $o_{k}$ is the $k$-th observed data point with $y_{k}=f_{\theta}(o_{k})\in\mathcal{Z}$ being the latent representation of $o_{k}, k=1,2,\dots,n$, and $\delta$ is the Dirac function.
     Here we note that the empirical latent code distribution $\rho$ is defined implicitly in case of GANs (since there is no explicit encoding network involved), while the empirical latent code distribution $\nu$ is explicitly defined based on the observed data as well as the explicit encoding network.
     \item[b.] A decoder network $g_{\xi}$ which maps/decodes the latent codes from $\mathcal{Z},$ back to the ambient/image space $\mathcal{X}$. i.e.
     $$g_{\xi}:\mathcal{Z}\rightarrow\mathcal{X}$$ where $\xi$ represents the neural network parameters corresponding to the decoder network of the autoencoder.
    \end{itemize}
    The encoded latent vectors/representations corresponding to the observed data is interpreted as essential features extracted from the data through dimensionality reduction, by minimizing the reconstruction loss between real images and reconstructed images obtained by passing the real images through the autoencoder. We refer to the review of autoencoders in \citet{geometricunderstanding}, the references therein and \citet{FirstAE},\citet{AEreview} for a detailed exposition on autoencoders.
    
    \item[ii.] \textbf{Optimal transport map} (OT) - New generated images can be obtained by the following steps : 
    \begin{enumerate}
        \item[a.] Generate random samples $$x\sim\mu$$ where $\mu$ is a tractable absolutely continuous noise distribution defined on $\Omega \subset \mathcal{Z}$ (say, uniform or Gaussian).
        \item[b.] Compute a probability distribution transformation between $\mu$ and the empirical latent code distribution $\nu$ , which is exactly the semi-discrete optimal transport map T under quadratic loss with $\mu$ as the source distribution and $\nu$ as the target distribution i.e.
        $${T_{\#}}\mu=\nu$$
        The Brenier potential $u$ corresponding to T can be parametrized uniquely by a \say{height} vector $h$ under a linear restriction, and can be referred to as $u_{h}$. $u_{h}$ is found by a convex optimization process involving Monte-Carlo simulation according to \citet{VarMinkowski} such that $T=\nabla u_{h}$. This is implemented using Algorithm (\ref{Alg.1}) in Section \ref{sec:Algorithm}. AE-OT tries to model the continuous Brenier potential map $u_{h}$ instead of the discontinuous OT map T using deep neural networks and thus potentially avoids the problems that GANs face.
        \item[c.] Smooth the optimal transport map T to obtain a continuous map $\tilde{T}$ by extending T to a globally continuous function $\tilde{T}$. The transport map T is piecewise linearly extended to a global continuous map $\tilde{T}$, where the image domain becomes a simplicial complex obtained by triangulating the latent codes $y_{1},y_{2},\dots,y_{n}$. 
        
        The technical details regarding the triangulation of the latent codes, construction of the piecewise linear extension $\tilde{T}$ and singular set detection are described in \citet{AEOT} and the reader is strongly advised to refer to it. This construction ensures that mode collapse cannot occur. We will discuss some of these details shortly.
        \item[d.] Define
        $$\Omega_{k}\left(u\right)\coloneqq\left\{x\in\Omega\subset\mathcal{Z}\mid \dim\left(\partial u(x)\right)=k\right\},k=0,1,2,\dots,\dim(\mathcal{Z})$$ where
        $\partial u(x)$ is the collection of sub-gradients of $u$ evaluated at $x$.
        Then the singularity set is $$\Omega_{sing}(u)={\bigcup_{k>0}}\Omega_{k}(u)$$
        which is essentially the region of discontinuity of the Optimal Transport map T. Detect the singularity set $\Omega_{sing}(u)$ in the source domain $\Omega \subset \mathcal{Z}$ of T. If $x\in\Omega_{sing}(u)$, then $\tilde{T}(x)$ represents a sample which is a mixture between modes of the distribution $\nu$, and consequently $g_{\xi}\circ\tilde{T}(x)$ is a spurious sample representing mixtures between modes in the observed data distribution $\eta$, $g_{\xi}$ being the decoder network. Hence to mitigate mode mixture, samples $x \in \Omega_{sing}(u)$ are rejected. Thus this is a rejection sampling scheme.
        \item[e.] Generate the sample image by $g_{\xi} \circ \tilde{T}(x)$ where $g_{\xi}$ is the decoder network.
    \end{enumerate}

    We now discuss the technicalities involved with the above steps.
    
    The semi-discrete OT map T induces a cell decomposition (a partition) of $\Omega$ of the form $\Omega=\bigcup_{i=1}^{n} W_{i}$. Thus corresponding to every $x \in \Omega$, there exists an $i \in \left\{1,2,\dots,n\right\}$ such that $x \in W_{i}$. Further, for every $i \in \left\{1,2,\dots,n\right\}$, every $x$ belonging to cell $W_{i}$ is mapped to the target $y_{i}$ by the optimal transport map T i.e.
    $$T(x) = y_{i} \textrm { if and only if } x \in W_{i}$$
    Consequently, we also have that $\mu\left(W_{i}\right)=\frac{1}{n}$.  
    
    % The semi-discrete OT map T induces a cell decomposition (a partition) of $\Omega$ of the form $$\Omega=\bigcup_{i=1}^{n} W_{i}$$, such that every $x\in \Omega \subset \mathcal{Z}$ in each cell $W_{i}$ is mapped to the target $y_{i}, T: x \in W_{i} \mapsto y_{i} .$. We have that $T\left(W_{i}\right)=y_{i}$ and $\mu\left(W_{i}\right)=\frac{1}{n}$. 
    
    Under quadratic loss, T is the gradient of the piecewise linear convex Brenier potential $$u_{h}: \Omega \rightarrow \mathbb{R}, u_{h}(x):=\max _{i=1}^{n}\left\{\pi_{h, i}(x)\right\}=\max _{i=1}^{n} \{{x^{T}}y_{i}+h_{i}\}$$ where $\pi_{h, i}(x)={x^{T}}y_{i}+h_{i}$ is the hyperplane corresponding to $y_{i} \in Y$. The projection of the graph of $u_{h}$ decomposes $\Omega$ into cells $W_{i}(h)$, each cell $W_{i}(h)$ is the projection of the supporting plane $\pi_{h, i}(x)$ i.e.
    $$
    W_{i}(h) = \left\{x \in \Omega \mid \nabla u_{h}(x)=y_{i}\right\}, i=1,2,\dots,n
    $$
    We often drop the reference to $h$ and refer to $W_{i}(h)$ as $W_{i}$, as in the previous paragraph. The height vector $h$ is the unique minimizer of the following convex energy
    % $$
    % E(h)=\int_{0}^{h} \sum_{i=1}^{n} w_{i}(v) d v_{i}-\frac{1}{n}\sum_{i=1}^{n} h_{i}
    % $$
    $$E(h)=\idotsint_{S_{h}} \sum_{i=1}^{n} w_{i}(v) d v_{i}-\frac{1}{n}\sum_{i=1}^{n} h_{i}
    $$
    under the linear restriction that $\sum_{i=1}^{n} h_{i}=0$, where $S_{h}=\{v = (v_{1},v_{2},\dots,v_{n})\in \mathbb{R}^{n}\mid 0\leq v_{i} \leq h_{i},i=1,2,\dots,n\}$, $w_{i}(v)$ is the $\mu$-volume of $W_{i}(v)$ i.e. $\mu(W_{i}(v))=w_{i}(v)$ and $v=\left(v_{1},v_{2}.\dots,v_{n}\right)$ being the variable of integration. Following \citet{VarMinkowski}, $E(h)$ can be optimized by gradient descent method. The $\mu$-volume $w_{i}(h)$ of each cell $W_{i}(h),$ is estimated using conventional Monte Carlo method.

    % \begin{algorithm}\label{Alg.1}
    % \caption{Semi-Discrete OT Map}
    % \begin{algorithmic}[1]
    % \Require Latent codes $Y=\left\{y_{i}\right\}_{i \in \mathcal{I}},$ empirical latent code distribution $\nu=\frac{1}{|\mathcal{I}|} \sum_{i \in \mathcal{I}} \delta_{y_{i}},$ number of Monte Carlo samples $N,$ positive integer $s$
    % \Ensure Optimal transport map $T$ ( ).
    % \State Initialize $h=\left(h_{1}, h_{2}, \ldots, h_{|\mathcal{I}|}\right) \leftarrow(0,0, \ldots, 0)$
    % \Repeat 
    % \State Generate $N$ uniformly distributed samples $\left\{x_{j}\right\}_{j=1}^{N}$
    % \State Calculate $\nabla E=\left(\hat{w}_{i}(h)-\nu_{i}\right)^{T}$
    % \State Update $h$ by Adam algorithm with $\beta_{1}=0.9, \beta_{2}=0.5$
    % \State $h= h-\operatorname{mean}(h)$
    % \If{$E(h)$ has not decreased for $s$ steps}
    % \State $N \gets N \times 2$
    % \EndIf
    % \Until{Converge}
    % \State OT $\operatorname{map} T(\cdot) \leftarrow \nabla\left(\max _{i}\left(\cdot, y_{i}\right\rangle+h_{i}\right)$
    % \end{algorithmic}
    % \end{algorithm}
    
    To generate new samples, the semi-discrete OT map $T=\nabla u_{h}$ is extended to a piecewise linear (PL) mapping $\tilde{T}$ as follows. By representing the cells $W_{i}(h)$ by their $\mu$ -mass centers as $$c_{i}:=\int_{W_{i}(h)} x d \mu(x)$$ we obtain the point-wise map $f: c_{i} \mapsto y_{i}$.
    
    The Poincaré of the cell decomposition induces a triangulation of the centers $C=\left\{c_{i};i=1,2,\dots,n\right\}:$ if $W_{i} \cap W_{j} \neq \emptyset,$ then $c_{i}$ is connected with $c_{j}$ to form an edge $\left[c_{i}, c_{j}\right] .$ Similarly, if $W_{i_{0}} \cap W_{i_{1}} \cdots \cap W_{i_{k}} \neq \emptyset,$ then there is a $k$ -dimensional simplex $\left[c_{i_{0}}, c_{i_{1}}, \ldots, c_{i_{k}}\right]$. The simplicial complex formed by these simplices is a triangulation of $C$, denoted as $\mathcal{T}(C)$ . A triangulation $\mathcal{T}(\mathcal{Z})$ of $\mathcal{Z}$ is computed similarly.
    
    After drawing a random sample $x\sim\mu,$,with $\mu$ being the noise distribution, one can determine the simplex $\sigma$ in $\mathcal{T}(C)$ containing $x .$ Assuming the simplex $\sigma$ has $d+1$ vertices $\left\{c_{i_{0}}, c_{i_{1}}, \ldots, c_{i_{d}}\right\},$ the barycentric coordinates of $x$ in $\sigma$ is defined as $x=\sum_{k=0}^{d} \lambda_{k} c_{i_{k}},$ and $\sum_{k=0}^{d} \lambda_{k}=1$ with all $\lambda_{k}$ non-negative. Then the generated latent code of $x$ under this piecewise linear map is given by $$\tilde{T}(x)=\sum_{k=0}^{d} \lambda_{k} y_{i_{k}}$$
    No modes are lost and mode collapse is avoided since all of the $y_{i}$ s are used to construct the simplicial complex $\mathcal{T}(\mathcal{Z})$ in the support of the target distribution.

    During practical implementation, the $\mu$ -mass center $c_{i}$ is approximated by the mean value of all the Monte-Carlo samples inside $W_{i}(h)$ i.e. $$ \hat{c}_{i}=\frac{\sum_{x_{j} \in W_{i}} x_{j}} {\#\left\{x_{j} \in W_{i}\right\}}$$ where $x_{j} \sim \mu,j=1,2,\dots,N_{m}$ and $N_{m}$ is the number of Monte Carlo samples used in estimation. The connectivity information $\mathcal{T}(C)$ is too complicated to construct and to store in high dimensional space, thus $\mathcal{T}(C)$ is not explicitly built.
    
    In practice, the simplex $\sigma \in \mathcal{T}(C)$ containing $x$ is determined as follows: given a random point $x \in \Omega$, evaluate and sort its Euclidean distances to the centers $d\left(x, \hat{c}_{i}\right), i=1,2, \ldots, n$ in the ascending order. Suppose the first $d+1$ items are $\left\{d\left(x, \hat{c}_{i_{0}}\right), d\left(x, \hat{c}_{i_{1}}\right), \ldots, d\left(x, \hat{c}_{i_{d}}\right)\right\},$ then $\sigma$ is formed by $\left\{\hat{c}_{i_{k}}\right\} .$ The barycentric coordinates $\hat{\lambda}_{i_{k}}$ are estimated as $$\hat{\lambda}_{k}=\frac{d^{-1}\left(x, \hat{c}_{i_{k}}\right)} {\sum_{k=0}^{d} d^{-1}\left(x, \hat{c}_{i_{k}}\right)}$$
    This constitutes the backbone of the sample generation procedure of the AE-OT model.
    
    However, this may generate some spurious samples, when some of the $x$'s randomly generated from $\mu$ fall inside the singular set $\Omega_{sing}$, leading to mode mixture. To mitigate this problem, one needs to detect the singular set $\Omega_{sing}$ and remove the samples falling inside it.
    
    If there are multiple modes in the target distribution or the support of the target distribution of the optimal transport map T is concave, then, according to Figalli's theory, there will be singular sets $\Omega_{sing} \subset \Omega,$ where the Brenier potential $u_{h}$ is continuous but not differentiable, making its gradient map, i.e. the transport map $T=\nabla u_{h}$, discontinuous. In case of multimodality, which is the most common situation that occurs in practice, $\Omega \backslash \Omega_{sing}$ will consist of as many connected components as the number of modes, each of them mapped to a single mode. $\Omega_{sing}$ consists of codimension 1 facets of cells. If $W_{i}(h) \cap W_{j}(h) \subset \Omega_{sing},$ then the dihedral angle between two supporting planes $\pi_{h, i}$ and $\pi_{h, j}$ of $u_{h}$ is prominently large. Therefore, on the graph of Brenier potential, we pick the pairs of facets whose dihedral angles are larger than a given threshold, the projection of their intersection gives a co-dimension 1 cell in the singular set $\Omega_{sing}$. During the generation process, if a random sample $x$ is around $\Omega_{sing},$ it will be mapped by $\tilde{T}$ to the gaps among the modes. When generating new latent codes, we reject such samples, and this helps to prevent the mode mixture phenomenon.
    
    Specifically, given $x \sim \mu$, we can detect if it belongs to the singular set by checking the angles $\theta_{i_{k}}$ between $\pi_{i_{0}}$ and $\pi_{i_{k}}, k=1,2, \ldots, d$ as $$\theta_{i_{k}}=\left\langle y_{i_{0}}, y_{i_{k}}\right\rangle /\left\|y_{i_{0}}\right\| \cdot\left\|y_{i_{k}}\right\|$$ If all of the angles $\theta_{i_{k}}$ is larger than a threshold $\hat{\theta},$ we say $x$ belongs to the singular set and just reject it. Or else we select a subset $\left\{\pi_{i_{k}}\right\}$ with $\theta_{i_{k}} \leq \hat{\theta},$ denoted as $\left\{\pi_{\hat{i}_{k}}, k=0,1, \ldots, d_{1}\right\} .$ Then we can compute $\lambda_{k}=d^{-1}\left(x, \hat{c}_{i_{k}}\right) / \sum_{j=0}^{d_{1}} d^{-1}\left(x, \hat{c}_{i_{j}}\right)$ and $\tilde{T}(x)=\sum_{k=0}^{d_{1}} \lambda_{k} T\left(\hat{c}_{\hat{i}_{k}}\right) .$ Intuitively, $\tilde{T}(\cdot)$
    smooths the discrete function $T(\cdot)$ in regions where latent codes are dense and keeps the discontinuity of $T(\cdot)$ where latent codes are very sparse. In this manner AE-OT avoids generating spurious latent code and thus improves the quality of generated samples.
    
    We would like to focus on a few problems of the AE-OT sample generation method:
    
    \begin{itemize}
        \item Detection of the singular set $\Lambda$ is based on the observation that sharp ridges between adjoining hyperplanes of $\hat{u_{h}}$ is indicated by large dihedral angles, and a thresholding parameter $\theta$ is used to determine which values of the angles should be considered prominently large, acting as a tuning parameter to be determined separately for different datasets. This is however just a proxy for finding the points of non-differentiability of $u_{h}$ in absence of any direct or better method. There is no principled method for finding an optimal value of $\theta$ to be used except for trying out different values and seeing which one gives best results, which is computationally expensive. There is no data-dependent intuition regarding what should be a good choice of $\theta$.
        \item A very computationally expensive rejection sampling scheme has been proposed to ensure mode mixture does not occur in the generated samples, which must be implemented for every choice of the threshold $\theta$ that we want to try out. It is very difficult to get large number of generated samples using this rejection sampling scheme, since a large of samples is rejected in practice, even when the choice of $\theta$ is close to optimal.
        \item Computing and storing the entire connectivity information corresponding to the simplicial complex $\mathcal{T}(C)$ is infeasible, even for moderately large dimensions of $\mathcal{Z}$. The algorithm approximates the true simplicial complex $\mathcal{T}(C)$ by constructing a simplicial complex having simplices of maximum degree $d$ (a $d$ dimensional simplex is defined using $d+1$ points). Although not explicitly stated in \citet{AEOT}, choice of $d$ is important to ensure the approximation is sufficiently accurate; too small a value of $d$ will lead to loss in accuracy of approximation while too large  a value of $d$ leads to an approximation which cannot be computed and stored in practice due to computational limitations.
        \item An additional source of error that creates a difference between theory and practice is that the barycentric coordinates $\lambda_{k}, k=0,1,\dots,d$ need to be estimated since exact computation is again infeasible due to computational limitations.
    \end{itemize}
\end{enumerate}

\subsection{Our modification: Convex Smoothed AE-OT model}

We were inspired to develop a generative model which borrows largely from the AE-OT model, but makes improvements to the sample generation method of AE-OT based on the idea of smoothing the convex Brenier potential function $u_{h}$. The idea for smoothing the convex function $u_{h}$ is based on the idea of smoothing non-smooth convex estimators proposed in \citet{Convex}. The smoothed function is convex, Lipschitz continuous and differentiable everywhere, gives rise to an optimal transport map $\hat{T}$ that approximates T and is continuous everywhere. Further, T can be represented by a deep neural network with sufficient expressibility to arbitrary accuracy. We can control the degree of approximation based on an uniform error bound $\epsilon $ that is user-specified and is an interpretable parameter, unlike the tuning parameter $\theta$ discussed here. 

The use of $\tilde{T}(\cdot)$ is to primarily smooth the discrete function $T(\cdot)$ and allow us to generate new samples. We are motivated by this idea of smoothing, but we smooth $\hat{u_{h}}$ to remove non-differentiability of the function at certain points. On obtaining the gradient of this smoothed Brenier potential function, we automatically obtain a function capable of transforming any random sample $x$ from the noise distribution $\mu$ into a latent vector $z$ in $\mathcal{Z}$.

We propose to approximate the piecewise affine function $u_{h}(x)=\underset{i=1,2,\dots,n}{\max}x^{T}y_{i}+h_{i}$ or more precisely $u_{\hat{h}(x)}$ ($\hat{h}$ is a the estimate of $h$ obtained using Algorithm 1 of \citet{AEOT}; here we consider that we either know the true $h$ or are able to estimate $h$ using $\hat{h}$ very accurately, so we will refer to $u_{h}$ only in our discussion) by a convex smooth differentiable function to a sufficient degree of accuracy, say $\hat{u_{h}}(x)$. This accuracy is defined by a uniform bound $\epsilon$ on the difference of the true and approximated functions i.e.
 $\sup _{\mathrm{x}}\left|u_{h}(x)-\hat{u_{h}}(x)\right| \leq \epsilon$). $\hat{u_{h}}(x)$ can play the role of the Brenier potential function so that the gradient of this approximated function will be the optimal transport map between the noise distribution and an appropriate approximation of the discrete empirical distribution of the embedded latent vectors in the latent space.
 
 Here a question may arise as to whether the gradient $\hat{T}$ of this smooth convex approximated function $\hat{u_h}(x)$ is indeed an optimal transport map, since all functions do not qualify to be optimal transport maps. In this respect we refer to a result in Brenier (\citet{brenier1987polar}),  originally proved by Ryff (\citet{ryff1965orbits}), which basically states that any convex function is an optimal transport map between two distributions under quadratic loss. In Section \ref{sec:Theoretical validation of the Convex Smoothed AE-OT model} of this paper, we also investigate how close this OT map $\hat{T}$ is close to the true OT map $T$, even though they are fundamentally different due to the former being a continuous function while the latter being a discontinuous one.

If this can be done, then the semi-discrete optimal transport problem between a continuous noise distribution and a discrete empirical distribution on observed latent codes is now transformed to an optimal transport problem between two continuous distributions, the source distribution being the noise distribution as before but the target distribution is a continuous approximation (hopefully good) of the discrete distribution.

\subsubsection{Justification of the modification}

Following the development in section 3.2 of the paper \cite{Convex} based on the convex optimization theory of \cite{nesterov1998introductory}, we have that

$$u_{h}(x_{j})=\underset{i=1,2,\dots,n}{\max}x_{j}^{T}y_{i}+h_{i}=\underset{\Delta_{n}}{\sup}\sum_{i=1}^{n}w_{i}\left(x_{j}^{T}y_{i}+h_{i}\right)= \underset{\Delta_{n}}{\sup} \langle Az_{j}^{T},\textbf{w} \rangle$$

where $$\Delta_{n}=\lbrace{ \textbf{w}=(w_{1},w_{2},\dots,w_{n}) \in \mathbb{R}^{n}:\sum_{i=1}^{n}w_{i}=1 , w_{i}\geq 0, i=1,2,\dots,n\rbrace}$$ and
$z_{j}$ is the $j$-th row of Z i.e. $z_{j} = (1,x_{j})^{T}$.

We require the notion of a proximity function. A proximity function (or prox function) $\rho(.)$ defined on $\Delta_{n}$ is a continuous strongly convex function with strong convexity parameter $m=1$  i.e. $$\rho(y) \geq \rho(x)+\nabla \rho(x)^{T}(y-x)+\frac{1}{2}\|y-x\|_{2}^{2}$$ for any $x,y \in \Delta_{n}$.

Let us define $\hat{u_{h}}(x;\tau)=\underset{\Delta_{n}}{\sup} \langle Az^{T},\textbf{w}\rangle - \tau\rho\textbf(w)$ where $z=(1,x)^{T}$. Often we will drop reference to $\tau$ when it is understood. The following results are the basis of the proposed method:

% \textbf{Lemma 1}
\begin{lemma}
For any fixed $\tau>0,$ the function $\hat{u_{h}}(x;\tau)$ is convex and is continuously differentiable in z. Its gradient is given by $\nabla \hat{u_{h}}(x;\tau) =A^{T} \hat{\mathbf{w}}^{\tau},$ where $$\hat{\mathbf{w}}^{\tau} \in \argmax _{\mathbf{w} \in \Delta_{n}}\{\langle Az^{T}, \mathbf{w}\rangle-\tau \rho(\mathbf{w})\} $$ Furthermore, the gradient map $\mathrm{z} \mapsto \nabla \hat{u_{h}}(x;\tau)$ is Lipschitz continuous with parameter $\frac{\|A\|^{2}}{\tau}$.
\end{lemma}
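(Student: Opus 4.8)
This lemma is an instance of Nesterov's smoothing construction, and the plan is to follow the development in section~3.2 of \citet{Convex} (itself based on \citet{nesterov1998introductory}), organised around a single Lipschitz estimate for the inner maximiser. First I would dispose of convexity: writing $z=(1,x)^{T}$, for each fixed $\mathbf{w}\in\Delta_{n}$ the map $x\mapsto\langle Az^{T},\mathbf{w}\rangle-\tau\rho(\mathbf{w})$ is affine in $x$, so $\hat{u_{h}}(\cdot\,;\tau)$ is a pointwise supremum of affine functions over the compact set $\Delta_{n}$ (with $\rho$ continuous) and hence a finite convex function. Next I would note that, because $\rho$ is strongly convex with parameter $m=1$ and $\tau>0$, the inner objective $\mathbf{w}\mapsto\langle Az^{T},\mathbf{w}\rangle-\tau\rho(\mathbf{w})$ is $\tau$-strongly concave on the compact convex set $\Delta_{n}$, so it has a unique maximiser; this defines the map $z\mapsto\hat{\mathbf{w}}^{\tau}(z)$ appearing in the statement.

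The technical core is to show $z\mapsto\hat{\mathbf{w}}^{\tau}(z)$ is Lipschitz with constant $\|A\|/\tau$. For $z_{1},z_{2}$ put $\hat{\mathbf{w}}_{i}=\hat{\mathbf{w}}^{\tau}(z_{i})$. Since $\hat{\mathbf{w}}_{1}$ maximises the $\tau$-strongly-concave function $\mathbf{w}\mapsto\langle Az_{1}^{T},\mathbf{w}\rangle-\tau\rho(\mathbf{w})$ over $\Delta_{n}$, the strong-concavity estimate at a maximiser, applied with competitor $\mathbf{w}=\hat{\mathbf{w}}_{2}$, gives
\[
\langle Az_{1}^{T},\,\hat{\mathbf{w}}_{1}-\hat{\mathbf{w}}_{2}\rangle-\tau\big(\rho(\hat{\mathbf{w}}_{1})-\rho(\hat{\mathbf{w}}_{2})\big)\ \ge\ \tfrac{\tau}{2}\,\|\hat{\mathbf{w}}_{1}-\hat{\mathbf{w}}_{2}\|_{2}^{2},
\]
and symmetrically with the roles of $z_{1},z_{2}$ interchanged. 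Adding the two inequalities cancels the $\rho$-terms and yields $\langle A(z_{1}-z_{2})^{T},\,\hat{\mathbf{w}}_{1}-\hat{\mathbf{w}}_{2}\rangle\ge\tau\|\hat{\mathbf{w}}_{1}-\hat{\mathbf{w}}_{2}\|_{2}^{2}$; by Cauchy--Schwarz and $\|A(z_{1}-z_{2})^{T}\|_{2}\le\|A\|\,\|z_{1}-z_{2}\|_{2}$ (operator norm of $A$) this gives $\|\hat{\mathbf{w}}^{\tau}(z_{1})-\hat{\mathbf{w}}^{\tau}(z_{2})\|_{2}\le(\|A\|/\tau)\,\|z_{1}-z_{2}\|_{2}$.

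From here the remaining assertions fall out. For differentiability and the gradient formula I would invoke Danskin's theorem — the inner function $(z,\mathbf{w})\mapsto\langle Az^{T},\mathbf{w}\rangle-\tau\rho(\mathbf{w})$ is continuous, $C^{1}$ in $z$ for each $\mathbf{w}$, and the maximiser is unique — to conclude $\hat{u_{h}}(\cdot\,;\tau)$ is differentiable with $\nabla\hat{u_{h}}(x;\tau)=\nabla_{z}\langle Az^{T},\mathbf{w}\rangle\big|_{\mathbf{w}=\hat{\mathbf{w}}^{\tau}(z)}=A^{T}\hat{\mathbf{w}}^{\tau}$. A self-contained alternative: feasibility of $\hat{\mathbf{w}}^{\tau}(z)$ in the problem at $z'$ gives $\hat{u_{h}}(z')-\hat{u_{h}}(z)\ge\langle A^{T}\hat{\mathbf{w}}^{\tau}(z),\,z'-z\rangle$, while feasibility of $\hat{\mathbf{w}}^{\tau}(z')$ in the problem at $z$ gives $\hat{u_{h}}(z')-\hat{u_{h}}(z)\le\langle A^{T}\hat{\mathbf{w}}^{\tau}(z'),\,z'-z\rangle$; subtracting $\langle A^{T}\hat{\mathbf{w}}^{\tau}(z),\,z'-z\rangle$ and invoking the Lipschitz bound just proved shows the first-order remainder lies in $[0,\,(\|A\|^{2}/\tau)\|z'-z\|_{2}^{2}]$, hence is $o(\|z'-z\|_{2})$, which establishes differentiability with the stated gradient. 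Finally, combining the gradient formula with the Lipschitz estimate on $\hat{\mathbf{w}}^{\tau}$,
\[
\|\nabla\hat{u_{h}}(x_{1};\tau)-\nabla\hat{u_{h}}(x_{2};\tau)\|_{2}=\|A^{T}\big(\hat{\mathbf{w}}^{\tau}(z_{1})-\hat{\mathbf{w}}^{\tau}(z_{2})\big)\|_{2}\le\|A\|\cdot\frac{\|A\|}{\tau}\,\|z_{1}-z_{2}\|_{2}=\frac{\|A\|^{2}}{\tau}\,\|z_{1}-z_{2}\|_{2},
\]
which is the claimed bound (and in particular makes the gradient continuous, so $\hat{u_{h}}$ is $C^{1}$). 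The one step I expect to require genuine care is the differentiability claim — legitimising the differentiation through the $\sup$; but the Lipschitz estimate on $\hat{\mathbf{w}}^{\tau}$ is exactly what turns it into the routine sandwich above, so I would establish that estimate first and read off both the smoothness and the gradient-Lipschitz constant from it.
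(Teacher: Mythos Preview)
Your proof is correct and is precisely the standard Nesterov argument. Note, however, that the paper does not actually prove this lemma: it is stated without proof as a known result imported from \citet{Convex} and \citet{nesterov1998introductory}, so there is no in-paper proof to compare against. What you have written is essentially the proof one finds in those references, so your approach is consistent with what the paper relies on.
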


% \textbf{Lemma 2}
\begin{lemma}
For any $\tau \geq 0,$ the perturbation $\hat{u_{h}}(x;\tau)$ of $\hat{u_{h}}(x;0)=u_{h}(x)$ satisfies the following
uniform bound over z:
$$
u_{h}(x)-\tau \hspace{2pt}\sup _{\mathbf{w} \in \Delta_{n}} \rho(\mathbf{w}) \leq \hat{u_{h}}(x;\tau) \leq \hat{u_{h}}(x;0) = u_{h}(x)
$$
\end{lemma}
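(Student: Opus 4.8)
The plan is to derive both inequalities directly from the definition
$$\hat{u_{h}}(x;\tau)=\sup_{\mathbf{w}\in\Delta_{n}}\big\{\langle Az^{T},\mathbf{w}\rangle-\tau\rho(\mathbf{w})\big\},\qquad z=(1,x)^{T},$$
together with the observation that at $\tau=0$ this reduces to $\hat{u_{h}}(x;0)=\sup_{\mathbf{w}\in\Delta_{n}}\langle Az^{T},\mathbf{w}\rangle$, which is exactly the variational representation of $u_{h}(x)=\max_{i}\{x^{T}y_{i}+h_{i}\}$ recorded just before the lemma. No tool beyond elementary monotonicity of the supremum over the compact simplex $\Delta_{n}$ is required.

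For the right-hand inequality I would use that $\tau\ge 0$ and that the prox function $\rho$ is nonnegative on $\Delta_{n}$ (this is the standard normalization, $\min_{\Delta_{n}}\rho=0$, built into the notion of a prox function). Then $\langle Az^{T},\mathbf{w}\rangle-\tau\rho(\mathbf{w})\le\langle Az^{T},\mathbf{w}\rangle$ for every $\mathbf{w}\in\Delta_{n}$, and taking the supremum over $\mathbf{w}\in\Delta_{n}$ on both sides gives $\hat{u_{h}}(x;\tau)\le u_{h}(x)=\hat{u_{h}}(x;0)$. For the left-hand inequality, since $\Delta_{n}$ is compact and $\mathbf{w}\mapsto\langle Az^{T},\mathbf{w}\rangle$ is continuous (indeed linear), the unperturbed supremum is attained at some $\mathbf{w}^{\star}\in\Delta_{n}$ with $\langle Az^{T},\mathbf{w}^{\star}\rangle=u_{h}(x)$. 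Evaluating the perturbed objective at this $\mathbf{w}^{\star}$ and bounding below yields
$$\hat{u_{h}}(x;\tau)\ \ge\ \langle Az^{T},\mathbf{w}^{\star}\rangle-\tau\rho(\mathbf{w}^{\star})\ =\ u_{h}(x)-\tau\rho(\mathbf{w}^{\star})\ \ge\ u_{h}(x)-\tau\sup_{\mathbf{w}\in\Delta_{n}}\rho(\mathbf{w}),$$
where the last step uses $\tau\ge0$, and $\sup_{\Delta_{n}}\rho<\infty$ because $\rho$ is continuous on the compact set $\Delta_{n}$.

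There is essentially no real obstacle here; the argument is a two-line perturbation-of-a-supremum estimate. The only points worth a sentence are that $\Delta_{n}$ is compact — so the unperturbed maximizer $\mathbf{w}^{\star}$ exists and $\sup_{\Delta_{n}}\rho$ is finite — and that the prox function is taken nonnegative on $\Delta_{n}$. If one prefers not to assume $\rho\ge0$, the clean version of the upper bound is $\hat{u_{h}}(x;\tau)\le u_{h}(x)-\tau\inf_{\mathbf{w}\in\Delta_{n}}\rho(\mathbf{w})$, from which the stated inequality follows once the prox function is normalized so that $\inf_{\Delta_{n}}\rho=0$.
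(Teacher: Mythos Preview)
Your argument is correct and is exactly the standard two-line perturbation-of-a-supremum proof one finds in Nesterov's smoothing framework. Note, however, that the paper does not actually supply its own proof of this lemma: it is stated without proof and attributed to \cite{Convex} and \cite{nesterov1998introductory}. So there is no in-paper argument to compare against; your derivation simply fills in what the cited references contain, and the one substantive assumption you flag --- the normalization $\min_{\Delta_n}\rho=0$ --- is indeed the convention used there (and is satisfied by the entropy prox function the paper goes on to employ).
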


We initially test our idea using a particular choice of the proximity function, namely the entropy prox function. The entropy prox function on the unit simplex $\Delta_{n}$ is given by $\rho(\mathbf{w})=$ $\sum_{i=1}^{n} w_{i} \log \left(w_{i}\right)+\log n$.

For the entropy prox function, we are able to obtain a closed form solution for $\hat{u_{h}}(x;\tau)$. We have that
\begin{align}
&\underset{\textbf{w}\in \Delta_{n}}{\argmax} \sum_{i=1}^{n}w_{i}\left(x_{j}^{T}y_{i}+h_{i}\right)-\tau \left(\sum_{i=1}^{n} w_{i} \log \left(w_{i}\right)+\log n\right)\\
&= \left( \frac{\exp c_{1}}{\sum_{i=1}^{n}\exp c_{i}},\frac{\exp c_{2}}{\sum_{i=1}^{n}\exp c_{i}},\dots,\frac{\exp c_{n}}{\sum_{i=1}^{n}\exp c_{i}} \right)
\end{align}
%arg\hspace{3pt}\max
where $c_{i} = \frac{\mathbf{y}_{i}^{T} \mathbf{x}+h_{i}}{\tau}$. Hence we have
$$\hat{u_{h}}(x;\tau)=\tau \log \left(\sum_{i=1}^{n} \exp \left(\frac{\mathbf{y}_{i}^{T} \mathbf{x}+h_{i}}{\tau}\right)\right)-\tau \log n$$ and

$$\nabla \hat{u_{h}}(x;\tau) = \frac{\sum_{i=1}^{n}y_{i}\exp \hspace{2pt}c_{i}}{\sum_{i=1}^{n}\exp \hspace{2pt}c_{i}}$$

Choosing $\tau=\frac{\epsilon}{\log n}$, we get the optimal transport map as $$\hat{T}(x)=\nabla \hat{u_{h}}(x;\tau) = \frac{\sum_{i=1}^{n}y_{i}\exp \hspace{2pt}c_{i}}{\sum_{i=1}^{n}\exp \hspace{2pt}c_{i}}$$

Thus given any noise sample $x$, $\hat{T}(x)$ is the generated latent vector.

\section{Algorithm }\label{sec:Algorithm}

The algorithm to compute the Optimal transport map for our modified AE-OT model is exactly the same as Algorithm \ref{Alg.1} as proposed in \citet{AEOT}. However the algorithm for latent code generation by smoothing the semi-discrete OT map is different and will replace Algorithm 2 (\citet{AEOT}) of the AE-OT methodology (dealing with piecewise linear extension of the Semi-Discrete Optimal Transport Map) for generating new latent codes. We provide the algorithm here as Algorithm \ref{Alg.1} for sake of completeness.

\begin{algorithm}
\caption{Semi-Discrete OT Map}
\label{Alg.1}
\begin{algorithmic}[1]
\Require Latent codes $Y=\left\{y_{i}\right\}_{i \in \mathcal{I}},$ empirical latent code distribution $\nu=\frac{1}{|\mathcal{I}|} \sum_{i \in \mathcal{I}} \delta_{y_{i}},$ number of Monte Carlo samples $N,$ positive integer $s$
\Ensure Optimal transport map $T$ ( ).
\State Initialize $h=\left(h_{1}, h_{2}, \ldots, h_{|\mathcal{I}|}\right) \leftarrow(0,0, \ldots, 0)$
\Repeat 
\State Generate $N$ uniformly distributed samples $\left\{x_{j}\right\}_{j=1}^{N}$
\State Calculate $\nabla E=\left(\hat{w}_{i}(h)-\nu_{i}\right)^{T}$
\State Update $h$ by Adam algorithm with $\beta_{1}=0.9, \beta_{2}=0.5$
\State $h= h-\operatorname{mean}(h)$
\If{$E(h)$ has not decreased for $s$ steps}
\State $N \gets N \times 2$
\EndIf
\Until{Converge}
\State OT $\operatorname{map} T(\cdot) \leftarrow \nabla\left(\max _{i}\left(\cdot, y_{i}\right\rangle+h_{i}\right)$
\end{algorithmic}
\end{algorithm}

Let $P$ be a matrix of dimension $n$ $\times$ $d$ (where $n$ is the number of observed latent vectors embedded in the latent space, and $d$ is the dimension of the latent space) having the embedded latent vector $y_{i}$ as its $i$-th row. Let $h=(h_{1},h_{2},\dots,h_{n})$ be the same as in the AE-OT methodology. 

We find out the optimal value of $h$ first using Algorithm 1 under the AE-OT methodology as before. One point we would like to mention is that a natural stopping criterion for Algorithm 1 to terminate is when the energy function $E$ either does not change for a few steps, or the successive reductions in its value is very small. However that involves the calculation of $E$ at each step. To avoid the additional computational burden, we can alternatively use the norm of the gradient to specify a stopping criterion. When the value of $E$ is near a local minimum, the gradient should be very small and consequently the norm of the gradient should be close to zero. So we terminate Algorithm 1 when the norm of the gradient is sufficiently small (say less than 0.002).

Having obtained the optimal $h$, we define the matrix A as $A = \left[h,P\right]$ i.e. stacking $h$ and $P$ horizontally. Let us say we want to generate $N$ samples. Then we draw $N$ i.i.d. samples $x_{1},x_{2},\dots,x_{N}$ from the noise distribution and define Q to be a matrix whose $i$-th row is $x_{i}$. We append a column of ones at the left of this matrix Q to obtain $Z$ which is a $N$ $\times$ $d+1$ matrix. Next we obtain $I=AZ^{T}$ which is a $n$ $\times$ $N$ dimensional matrix with $(i,j)$-th element equal to $x_{j}^{T}y_{i}+h_{i}$. Then we obtain $I_{scaled}=\frac{1}{\tau}I$ and apply the softmax function over each column of $I_{scaled}$ to obtain $W$. The softmax function is defined as $$\sigma(\boldsymbol{t})=\left(\frac{\exp \hspace{2pt}t_{1}}{\sum_{i=1}^{n}\exp \hspace{2pt}t_{i}},\frac{\exp \hspace{2pt}t_{2}}{\sum_{i=1}^{n}\exp \hspace{2pt}t_{i}},\dots,\frac{\exp \hspace{2pt}t_{n}}{\sum_{i=1}^{n}\exp \hspace{2pt}t_{i}}\right)$$
Here, W is the matrix of optimized weights with the weights corresponding to the noise sample $x_{j}$ in the $j$-th column of W. Then we obtain $G=A^{T}W$ which gives the matrix of gradients with respect to each column of Z. Removing the first row of G we obtain the matrix of generated samples $X_{gen}$, with the $j$-th column being the generated sample corresponding to $x_{j}$.

The proposed modified algorithm is summarised below in Algorithm \ref{Alg.New}.

\begin{algorithm}
\caption{Generate Latent Code}
\label{Alg.New}
\begin{algorithmic}[1]
\Require 1. Optimal value of $h=(h_{1},h_{2},\dots,h_{n})$ from Algorithm 1 of the AE-OT algorithm\newline
2. Number of samples to generate $N$\newline
3. Noise distribution to sample from: $\nu$\newline
4. Matrix P of dimension $n$ $\times$ $d$ (where $n$ is the number of observed latent vectors embedded in the latent space, and $d$ is the dimension of the latent space) having the embedded latent vector $y_{i}$ as its $i$-th row\newline
5. Uniform error bound on approximating true Brenier potential $\epsilon$
\Ensure Generated latent code $X_{gen}$.
\State Define the matrix A as $A = \left[h,P\right]$ i.e. stacking $h$ and $P$ horizontally.
\For{i in 1,2,\dots,N}
\State Sample $x_{i} \sim \nu$
\EndFor
\State Define Q to be a matrix whose $i$-th row is $x_{i}$ $i=1,2,\dots,N$
\State Append a column of ones at the left of this matrix Q to obtain $Z$, which will be a $N$ $\times$ $d+1$ matrix.
\State Compute $I=AZ^{T}$ which is a $n$ $\times$ $N$ dimensional matrix with $(i,j)$-th element equal to $x_{j}^{T}y_{i}+h_{i}$.
\State Define $\tau=\frac{\epsilon}{\log n}$
\State Compute $I_{scaled}=\frac{1}{\tau}I$
\State Apply the softmax function over each column of $I_{scaled}$ to obtain $W$
\State Compute $G=A^{T}W$.
\State Remove the first row of G to obtain the $d \times N$matrix of generated samples $X_{gen}$, with the $j$-th column being the generated sample corresponding to $x_{j}$, $j=1,2,\dots,N$.
\end{algorithmic}
\end{algorithm}

\subsection{Optimal choice of $\epsilon$}

Algorithm \ref{Alg.New}. requires a user specified hyperparameter $\epsilon$, which represents the uniform error bound on the approximation of the true Brenier potential function $u_{h}(x)$ by the estimate $\widehat{u_{h}(x)}$, since
$$
\sup _{\mathrm{x}}\left|u_{h}(x)-\hat{u_{h}}(x)\right| \leq \epsilon
$$
One might be tempted to choose $\epsilon$ as small as possible, in order to ensure that the error in approximation is minimized. However, such an approach, in the limit when $\epsilon$ tends to 0, will lead to a scenario where, irrespective of the sample $x$ generated from the noise distribution $\mu$, the latent vector $\hat{T}(x)$ will be exactly equal to one of the observed latent vectors $y_{1},y_{2},\dots,y_{n}$. Although this leads to the the generated latents and hence the generated images to have exactly the same distribution as the observed images, it defeats our purpose of generating \say{new} samples. On the other hand, a large value of $\epsilon$ would lead to generation of samples very dissimilar from the observed data. 

To mitigate this problem and provide a reasonable choice for $\epsilon$ which provides a trade-off between the two extreme scenarios, one may use the following strategy:

Choose a sequence of $\epsilon$ values, varying from extremely large to extremely small. For each choice of $\epsilon$, generate $n$ samples $t_{1},t_{2},\dots,t_{n}$ based on Algorithms \ref{Alg.1} and \ref{Alg.New}. One then has two discrete (multivariate) distributions in hand: the distribution $\phi=\frac{1}{n}\sum_{l=1}^{n}\delta_{t_{l}}$ of the generated samples and the distribution $\eta=\frac{1}{n}\sum_{l=1}^{n}\delta_{o_{l}}$ of the observed samples.

We assume that the generated and observed samples are drawn from underlying distributions $\mathcal{P}$ and $\mathcal{Q}$, respectively. A statistical test of similarity of these two distributions $\mathcal{P}$ and $\mathcal{Q}$ based on $\phi$ and $\eta$ would provide a measure of similarity between the two distributions, by means of the computed p-value. A very large p-value indicates a large degree of similarity between the two distributions and we expect to obtain such large p-values corresponding to extremely small values of $\epsilon$. On the other hand, a very small p-value will indicate a large degree of dissimilarity  between the two distributions, and we expect to obtain such small p-values corresponding to extremely large values of $\epsilon$. We fix a threshold $\alpha$ for the p-value (equivalent to fixing the significance level of the test) to reasonably indicate the point of transition from dissimilarity to similarity of the two distributions based on the sequence of $\epsilon$ values. A reasonable choice of $\epsilon$ would be one which leads to a p-value approximately equal to $\alpha$. 

A good and popular choice of a statistical test of equality of multivariate distributions is the Maximum Mean Discrepancy (MMD) Test (\citet{MMDtest}), which has been used for comparing the the generated sample distribution to a reference distribution such as the observed sample distribution in order to assess the performance of generative models like GANs (for e.g. in \citet{ModelMMDtest}). At a high level, the test is based on maximizing the difference between the expectation of a suitable function evaluated on the two datasets separately, and rejecting the null hypothesis of equality if the difference is significantly large. A brief introduction to the MMD test is given following this subsection.

\subsubsection{The Maximum Mean Discrepancy Test}

Let $k$ be the kernel of a reproducing kernel Hilbert space (RKHS) $\mathcal{H}_{k}$ of functions on the space $\mathcal{X}$ of observed and generated samples. $k$ is assumed to be measurable and bounded, $\sup _{x \in \mathcal{X}} k(x, x)<\infty$. The Maximum Mean Discrepancy (MMD) in $\mathcal{H}_{k}$ between the two distributions $\mathcal{P}$ and $\mathcal{Q}$ over $\mathcal{X}$ is defined in the following manner (\citet{MMDtest}):
$$
\operatorname{MMD}_{k}^{2}(\mathcal{P}, \mathcal{Q}):=\mathbb{E}_{t, t^{\prime}}\left[k\left(t, t^{\prime}\right)\right]+\mathbb{E}_{i, i^{\prime}}\left[k\left(i, i^{\prime}\right)\right]-2 \mathbb{E}_{t, i}[k(t, i)]
$$
where $t, t^{\prime} \stackrel{\text { iid }}{\sim} \mathcal{P}$ and $i, i^{\prime} \stackrel{\text { iid }}{\sim} \mathcal{Q} .$ 

Given the empirical distributions $\phi$ and $\eta$ corresponding to the $t_{l}$'s and the $o_{l}$'s, respectively, an unbiased estimator of $\operatorname{MMD}(\mathcal{P}, \mathcal{Q})$ with nearly minimal variance among unbiased estimators is
$$
\widehat{\mathrm{MMD}}_{\mathrm{U}}^{2}(\phi, \eta):=\frac{1}{\left(\begin{array}{c}
n \\
2
\end{array}\right)} \sum_{l \neq l^{\prime}} k\left(t_{l}, t_{l^{\prime}}\right)+\frac{1}{\left(\begin{array}{c}
n \\
2
\end{array}\right)} \sum_{m \neq m^{\prime}} k\left(o_{m}, o_{m^{\prime}}\right)-\frac{2}{\left(\begin{array}{c}
n \\
2
\end{array}\right)} \sum_{l \neq m} k\left(t_{l}, o_{m}\right)
$$

Following \citet{MMDtest}, we conduct a hypothesis test with null hypothesis $H_{0}: \mathcal{P}=\mathcal{Q}$ and alternative $H_{1}: \mathcal{P} \neq \mathcal{Q},$ using test statistic $n \widehat{\mathrm{MMD}}_{\mathrm{U}}^{2}(\phi, \eta) .$ For the chosen significance level $\alpha,$ we choose a test threshold $c_{\alpha}$ and reject $H_{0}$ if $n \widehat{\mathrm{MMD}}_{\mathrm{U}}^{2}(\phi, \eta)>c_{\alpha}$

Under $H_{0}: \mathcal{P}=\mathcal{Q}, n \widehat{\mathrm{MMD}}_{\mathrm{U}}^{2}(\phi, \eta)$ converges asymptotically to a distribution that depends on the unknown distribution $\mathcal{P}$; we thus cannot evaluate the test threshold $c_{\alpha}$ in closed form. We instead estimate a data-dependent threshold $\hat{c}_{\alpha}$ via permutation, thus using a bootstrap/ permutation test. This gives us a distribution-free test.

Let $T$ and $I$ represent the collection of generated and observed samples respectively. The permutation test involves randomly partitioning the data $T \cup I$ into $T^{\prime}$ and $I^{\prime}$ many times (with $\phi^\prime$ and $\eta^\prime$ representing the corresponding empirical distributions), evaluating $n \widehat{M M D}_{U}^{2}\left(\phi^{\prime}, \eta^{\prime}\right)$ on each split, and estimating the $(1-\alpha)$-th quantile $\hat{c}_{\alpha}$ from these samples. 

In practice we use the implementation of the MMD test available in the Python package \texttt{alibi-detect} (\citet{alibi}) (Documentation available at \url{https://docs.seldon.io/projects/alibi-detect/en/stable/methods/mmddrift.html})

\paragraph{Choice of kernel function:} The test requires the choice of a kernel function for comparing the similarity of samples from the generated collection and the observed collection. Many kernels, including the popular Gaussian Radial Basis Function (RBF) kernel, are characteristic, which implies that the MMD is a metric, and in particular that $\operatorname{MMD}_{k}(\mathcal{P},\mathcal{Q})=0$ if and only if $\mathcal{P}=\mathcal{Q},$ so that tests with any characteristic kernel are consistent. The RBF kernel is given by,
$$
k\left(t, i\right)=\exp \left(-\frac{\left\|t- o\right\|^{2}}{2 \sigma^{2}}\right)
$$
However different characteristic kernels will yield different test powers for finite sample sizes. In this paper, we stick to using the RBF kernel, with the kernel bandwidth $\sigma$ chosen as the median of the $L_{2}$ norms of the pairwise differences between the $t_{l}$'s and the $o_{l}$'s i.e. $\sigma = \textrm{median }\left\{\|t_{l}-o_{m}\|_{2};l,m=1,2,\dots,n\right\}$ 

\paragraph{Obtaining the optimal $\mathbf{\epsilon}$ :} Let the initial sequence of length $s$ of possible $\epsilon$ values be $\epsilon_{1,1},\epsilon_{1,2},\dots,\epsilon_{1,s}$, arranged in increasing order. Corresponding to $\epsilon_{1,l}$ , $l \in {1,2,\dots,s}$, we obtain the collection of generated samples $T_{1,l}$ and the corresponding empirical distribution of generated samples $\phi_{1,l}$. If $\eta$ is the empirical distribution of the observed samples, we perform the permutation test based on the MMD test statistic computed using $\phi_{1,l}$ and $\eta$, and obtain the corresponding p-value $\textrm{\textbf{pval}}_{1,l}$, as described earlier. If any one of the p-values is approximately equal to the chosen significance level $\alpha$, then the corresponding $\epsilon$ value is the optimal choice. Otherwise, assume that there exists $l_{1}^{*},l_{2}^{*} \in {1,2,\dots,s}$ such that $\textrm{\textbf{pval}}_{1,l_{1}^{*}}>\alpha$ and $\textrm{\textbf{pval}}_{1,l_{2}^{*}}<\alpha$. Then the optimal choice of $\epsilon$ i.e. $\epsilon_{opt}$ belongs to the interval $\left(\epsilon_{1,l_{1}},\epsilon_{1,l_{2}}\right)$. Then consider the sequence of $\epsilon$ values $\epsilon_{2,m}=\epsilon_{1,l_{1}^{*}}+m\times\frac{\epsilon_{1,l_{2}^{*}}-\epsilon_{1,l_{1}^{*}}}{10},m=1,2,\dots,10$. Again, we obtain the generated samples $T_{2,m}$ corresponding to the $\epsilon_{2,m}$'s, perform the MMD tests and calculate the p-values $\textrm{\textbf{pval}}_{2,m}$'s. If any one of the p-values is approximately equal to the chosen significance level $\alpha$, then the corresponding $\epsilon$ value is the optimal choice $\epsilon_{opt}$. Otherwise, we continue to proceed in a similar manner till such an $\epsilon$ is obtained. Whether any of the p-values are approximately equal to the desired $\alpha$ value can be checked by specifying a tolerance threshold $\delta$ such that if the p-value \textbf{pval} is such that $|\textrm{pval}_{l}-\alpha|<=\delta$, then the corresponding choice of $\epsilon$ is declared to be the optimal choice. If there are multiple such choices, then any one of them might be used, as it makes little difference in practice.

The proposed procedure is summarised below in Algorithm \ref{Alg. eps}.

\begin{algorithm}[H]
\caption{Obtain Optimal choice of $\epsilon$ and Generated Latent Codes}
\label{Alg. eps}
\begin{algorithmic}[1]
% \Require 1. The observed images $I=(o_{1},o_{2},\dots,o_{n})$\newline
\Require 1. Optimal value of $h=(h_{1},h_{2},\dots,h_{n})$ from Algorithm 1 of the AE-OT algorithm\newline
2. Number of samples to generate $n$\newline 
3. Noise distribution to sample from: $\nu$\newline
4. Matrix P of dimension $n$ $\times$ $d$ (where $n$ is the number of observed latent vectors embedded in the latent space, and $d$ is the dimension of the latent space) having the embedded latent vector $y_{i}$ as its $i$-th row\newline
5. List of uniform error bounds on approximating true Brenier potential E=$\left\{\epsilon_{l},l=1,2,\dots,s\right\}$\newline
6. Chosen significance level $\alpha$\newline
7. Tolerance threshold for p-value $\delta$
\Ensure Optimal choice of the uniform error bound $\epsilon_{opt}$ and the corresponding Generated latent code $X_{gen}$.
\State Set $\epsilon_{opt}=0$, $\textrm{\textbf{pval}} = 1$, $\textrm{\textbf{pval}}_{lower} = 1$, $\textrm{\textbf{pval}}_{upper} = 1$
\While{$|\textrm{\textbf{pval}}-\alpha|>\delta$}
\State Set $s=\textrm{card}(E)$
\For{l in 1,2,\dots,s}
\State Run Algorithm \ref{Alg.New} with parameters $h,n,\nu$, P and $\epsilon_{l}$ to generate latent codes $T_{l}$.%=(t_{1}^{l},t_{2}^{l},\dots,t_{n}^{l})
\State Perform the MMD test based on $T_{l}$ and $I$. Store the p-value obtained $\textrm{pval}_{l}$.
% $\phi_{l}(t)=\frac{1}{n}\sum_{k=1}^{n}\delta(t-t_{k}^{l})$
\If{$|\textrm{pval}_{l}-\alpha|<=\delta$}
    \State Set $\textrm{\textbf{pval}} = \textrm{pval}_{l}$
    \State $\epsilon_{opt}=\epsilon_l$
    \State \textbf{break}
\ElsIf{$\textrm{pval}_{l}<\alpha$}
    \State Set $\epsilon_{lower} = \epsilon_l$
\Else{$\textrm{ pval}_{l}>\alpha$}
    \State Set $\epsilon_{upper} = \epsilon_l$
% \Else
%     \If
\EndIf
\EndFor
\If{$|\textrm{pval}_{l}-\alpha|>\delta$}
    \State Set E = $\left\{\epsilon_{m}=\epsilon_{lower}+m\times\frac{\epsilon_{upper}-\epsilon_{lower}}{10},m=1,2,\dots,10\right\}$
\EndIf
\EndWhile
\State Run Algorithm \ref{Alg.New} with parameters $h,n,\nu$, P and $\epsilon_{opt}$ to generate latent codes $X_{gen}$
\end{algorithmic}
\end{algorithm}

\section{Theoretical validation of the Convex Smoothed AE-OT model}\label{sec:Theoretical validation of the Convex Smoothed AE-OT model}

Following the discussion in Section \ref{sec:AE-OT to modified AE-OT Framework}, the true Brenier potential map $u(.)$ corresponding to the true optimal transport map $T=\nabla u$ between $\mu$, the noise distribution, and $\nu$, the empirical latent code distribution is parametrized by a \say{height} vector $h=(h_{1},h_{2},\ldots,h_{n})$ and is of the form,
\begin{equation*}\label{true u}
u_{h}(\mathbf{x})=\underset{i=1,2,\dots,n}{\max}\left\{\mathbf{x}^{T}\mathbf{y}_{i}+h_{i}\right\}
\end{equation*}
For a given dataset, once the autoencoder has been trained, $y_{1},y_{2},\dots,y_{n}$ are known constants and the height vector $h=(h_{1},h_{2},\ldots,h_{n})$ is an unknown parameter with linear restriction $\sum_{i=1}^{n}h_{i}=0$. 

Based on the entropy prox(imity) function and the theory of smoothing a convex non-smooth function as presented in \citet{nesterov1998introductory}, the smooth approximation of $u_{h}(x)$ is given by 

\begin{equation*}\label{estimated u}
\begin{aligned}
\widehat{u_{h}}(\mathbf{x};\tau)&=\tau \log \left(\sum_{i=1}^{n} \exp \left(\frac{\mathbf{x}^{T} \mathbf{y}_{i}+h_{i}}{\tau}\right)\right)-\tau \log  n \\
&= \tau \hspace{2pt}\log \left(\sum_{i=1}^{n} \exp c_{i}\right)-\tau \hspace{2pt} \log  n
\end{aligned}
\end{equation*}

where $c_{i} = \frac{\mathbf{x}^{T} \mathbf{y}_{i}+h_{i}}{\tau}$ and $\tau$ is a quantity controlling the degree of accuracy of the approximation. Following Equation (21) of \citet{Convex}, we have an uniform error bound as follows:
\begin{equation*}\label{error bound}
    \sup _{\mathbf{x}}\left|u_{h}(\mathbf{x})-\widehat{u_{h}}(\mathbf{x};\tau)\right| \leq \tau \hspace{2pt} \log n
\end{equation*}

If the user specified upper bound on $\sup _{\mathbf{x}}\left|u_{h}(\mathbf{x})-\widehat{u_{h}}(\mathbf{x};\tau)\right|$ is $\epsilon$, then $\tau$ is chosen to be any positive real number less than or equal to $\frac{\epsilon}{\log n}$. For definiteness, we set $\tau = \frac{\epsilon}{\log n}$.

% It is clear that for a given dataset, once the autoencoder has been trained, $y_{1},y_{2},\dots,y_{n}$ are known constants and the height vector $h=(h_{1},h_{2},\ldots,h_{n})$ is an unknown constant with linear restriction $\sum_{i=1}^{n}h_{i}=0$.
% Here we will assume that we are able to recover the true $h$ exactly for the purpose of the proof that will follow.

Under the assumption that the true height vector $h$ is recovered by Algorithm \ref{Alg.1}, the aforementioned result thus provides a bound on the error of approximation of the true Brenier potential map $u_{h}(x)$ by the smooth approximate Brenier potential map $\widehat{u_{h}}(x;\tau)$ constructed in the convex smoothed AE-OT model, with the bound being a decreasing function of $\tau$, and hence a decreasing function of $\epsilon$. Our objective is to obtain a similar result regarding the accuracy of approximation of the true OT map $\nabla u_{h}(x)$ by the approximate OT map $\nabla \widehat{u_{h}}(x;\tau)$.

In this section, we prove a bound on the error of approximation of the true OT map $\nabla u_{h}(x)$ by the approximate OT map $\nabla \widehat{u_{h}}(x;\tau)$ constructed in the convex smoothed AE-OT model. More specifically, we prove the following result:

\begin{theorem}
Let the $d$-dimensional noise distribution $\mu$ of the Convex Smoothed AE-OT model have convex and bounded support $\mathcal{X}$. Further, assume that Algorithm \ref{Alg.1} is able to recover the true value of the parameter $h$. Then, for any $\mathbf{x} \in \mathcal{X}$,
\begin{equation*}\label{subsec:final result}
\|\nabla u_{h}(\mathbf{x})-\nabla \widehat{u_{h}}(\mathbf{x})\|_{\mathrm{L}^{2}} \leq K \times \left(\log {n}\right)^{1 / 2} \times \tau^{1 / 2}
\end{equation*}
where $K$ is a constant which depends only on $\mathcal{X}$ and is independent of $\tau$.
\end{theorem}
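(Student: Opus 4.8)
The plan is to bound the $L^2$-distance between the two gradient maps by combining the uniform closeness of the potentials (Lemma providing $\sup_{\mathbf{x}}|u_h(\mathbf{x})-\widehat{u_h}(\mathbf{x};\tau)|\le \tau\log n$) with convexity of both $u_h$ and $\widehat{u_h}(\cdot;\tau)$. The key analytic fact is that for convex functions, pointwise control of the function values forces control of the gradients: if $f$ and $g$ are convex and $\sup|f-g|\le\delta$ on a neighborhood of $\mathbf{x}$, then $\|\nabla g(\mathbf{x})-\nabla f(\mathbf{x})\|$ is controlled by a quantity of order $\sqrt{\delta}$ (divided by the radius of the neighborhood on which the bound holds), because the first-order Taylor expansion of a convex function lies below the function, so an error of size $\delta$ in the value over a ball of radius $r$ yields a gradient discrepancy of order $\delta/r$ in the worst direction — and optimizing over $r$ while using a curvature/Lipschitz bound on one of the two gradient maps produces the square-root rate. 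Since $\widehat{u_h}(\cdot;\tau)$ has gradient that is Lipschitz with parameter $\|A\|^2/\tau$ (first Lemma in the excerpt), and $u_h$ is $\max$ of affine functions hence convex, this is exactly the setting we need.

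First I would fix $\mathbf{x}\in\mathcal{X}$ and, using that $u_h$ is convex, pick a subgradient $\mathbf{p}\in\partial u_h(\mathbf{x})$; in fact since $\nabla\widehat{u_h}$ is the one we are comparing against, I would instead work with the direction $\mathbf{e}=(\nabla\widehat{u_h}(\mathbf{x};\tau)-\nabla u_h(\mathbf{x}))/\|\cdot\|$ (when $\mathbf{x}$ is a point of differentiability of $u_h$, which is all but a measure-zero set, and the final bound then extends to all of $\mathcal{X}$ by the Lipschitz continuity of $\nabla\widehat{u_h}$ and an approximation argument, or simply by taking $\mathbf{p}\in\partial u_h(\mathbf{x})$ achieving the worst case). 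For $t>0$ small enough that $\mathbf{x}+t\mathbf{e}\in\mathcal{X}$ (here is where convexity and boundedness of $\mathcal{X}$ enter — $\mathcal{X}$ contains a segment of definite length in a useful direction, or at least one can arrange this up to replacing $\mathbf{e}$ by $-\mathbf{e}$, since the diameter of $\mathcal{X}$ is finite and positive), I would write the convexity inequality for $u_h$: $u_h(\mathbf{x}+t\mathbf{e}) \ge u_h(\mathbf{x}) + t\,\langle\nabla u_h(\mathbf{x}),\mathbf{e}\rangle$, and the smoothness (descent lemma / Lipschitz gradient) inequality for $\widehat{u_h}$: $\widehat{u_h}(\mathbf{x}+t\mathbf{e};\tau) \le \widehat{u_h}(\mathbf{x};\tau) + t\,\langle\nabla\widehat{u_h}(\mathbf{x};\tau),\mathbf{e}\rangle + \tfrac{L}{2}t^2$ with $L=\|A\|^2/\tau$.

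Subtracting these and invoking the uniform bound $|u_h-\widehat{u_h}|\le\tau\log n$ at both $\mathbf{x}$ and $\mathbf{x}+t\mathbf{e}$, I get
\begin{equation*}
t\,\big(\langle\nabla\widehat{u_h}(\mathbf{x};\tau),\mathbf{e}\rangle - \langle\nabla u_h(\mathbf{x}),\mathbf{e}\rangle\big) + \tfrac{L}{2}t^2 \ \ge\ -2\tau\log n,
\end{equation*}
so, writing $\langle\nabla\widehat{u_h}(\mathbf{x};\tau)-\nabla u_h(\mathbf{x}),\mathbf{e}\rangle = -\|\nabla\widehat{u_h}(\mathbf{x};\tau)-\nabla u_h(\mathbf{x})\|$ by choice of $\mathbf{e}$ (after possibly flipping signs and redoing the argument on the other side to control the other direction too), this rearranges to
\begin{equation*}
\|\nabla u_h(\mathbf{x})-\nabla\widehat{u_h}(\mathbf{x};\tau)\| \ \le\ \frac{2\tau\log n}{t} + \frac{L t}{2} \ =\ \frac{2\tau\log n}{t} + \frac{\|A\|^2 t}{2\tau}.
\end{equation*}
Optimizing over $t$ (choosing $t \asymp \tau\sqrt{\log n}/\|A\|$, which I must check is admissible given the diameter of $\mathcal{X}$ — this is fine for $\tau$ small, and for $\tau$ bounded away from $0$ the bound is trivial) yields $\|\nabla u_h(\mathbf{x})-\nabla\widehat{u_h}(\mathbf{x};\tau)\| \le 2\|A\|\sqrt{\log n}\,\sqrt{\tau}$, which is the claimed bound with $K$ depending on $\|A\|$ — and I would then need to absorb $\|A\| = \|[h,P]\|$ into a constant depending only on $\mathcal{X}$; this requires the observation that the latent codes $y_i$ and heights $h_i$ are fixed data, so $\|A\|$ is a finite constant, though strictly making it depend "only on $\mathcal{X}$" needs the support $\mathcal{X}$ to also control the range of the latent codes — I would either note the $y_i\in\mathcal{X}$ (the latent space) so $\|P\|$ is bounded in terms of $\mathcal{X}$ and $n$, or more honestly restate $K$ as depending on the problem data.

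The main obstacle I anticipate is the admissibility of the step size $t$: the argument needs $\mathbf{x}+t\mathbf{e}\in\mathcal{X}$ for $t$ of the optimal order, and near the boundary of $\mathcal{X}$ one direction may be blocked. Convexity of $\mathcal{X}$ helps — from any interior point one can move a definite distance in at least one of $\pm\mathbf{e}$ — but handling boundary points of $\mathcal{X}$ cleanly, and deciding whether the two-sided gradient-gap argument genuinely controls $\|\nabla\widehat{u_h}-\nabla u_h\|$ rather than just its component along $\mathbf{e}$, is the delicate part; choosing $\mathbf{e}$ as the unit vector in the direction of the gradient difference resolves the directional issue, leaving only the geometric placement to manage via the finite diameter and convexity of $\mathcal{X}$.
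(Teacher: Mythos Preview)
There is a genuine gap in your optimization step. Minimizing $\dfrac{2\tau\log n}{t} + \dfrac{\|A\|^{2}\,t}{2\tau}$ over $t>0$ gives optimal value $2\sqrt{\dfrac{2\tau\log n\cdot\|A\|^{2}}{2\tau}} = 2\|A\|\sqrt{\log n}$: the two factors of $\tau$ cancel and no $\sqrt{\tau}$ survives. This is not a mere arithmetic slip but a structural obstruction --- the Lipschitz constant $L=\|A\|^{2}/\tau$ of $\nabla\widehat{u_h}(\cdot;\tau)$ blows up as $\tau\to 0$ at exactly the rate that offsets the uniform bound $\|u_h-\widehat{u_h}\|_\infty\le\tau\log n$, so any argument routed through this smoothness constant cannot produce decay in $\tau$. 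Indeed, a \emph{pointwise} Euclidean bound of the stated form, uniform over $\mathbf{x}\in\mathcal{X}$, is actually false: at any $\mathbf{x}$ on the boundary between cells $W_i$ and $W_j$ one has $\nabla\widehat{u_h}(\mathbf{x};\tau)\to\tfrac12(y_i+y_j)$ as $\tau\to0$, which stays a fixed positive distance from every subgradient of $u_h$ at $\mathbf{x}$.

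The paper reads the $\mathrm{L}^{2}$ in the statement as the integral norm over $\mathcal{X}$ (the phrase ``for any $\mathbf{x}$'' is a notational looseness) and invokes a quoted stability inequality for convex functions on a bounded convex domain:
\[
\|\nabla f-\nabla g\|_{L^{2}}\;\le\;2C_{\mathcal{X}}\,\|f-g\|_\infty^{1/2}\bigl(\|\nabla f\|_\infty^{1/2}+\|\nabla g\|_\infty^{1/2}\bigr).
\]
The decisive difference from your approach is that this uses the sup-norms of the gradients themselves (the Lipschitz constants of $f$ and $g$), not the Lipschitz constant of $\nabla g$. Since $\nabla u_h(\mathbf{x})\in\{y_1,\ldots,y_n\}$ and $\nabla\widehat{u_h}(\mathbf{x};\tau)$ lies in their convex hull, both $\|\nabla u_h\|_\infty$ and $\|\nabla\widehat{u_h}\|_\infty$ are bounded by $\max_{i,j}|y_{ij}|$ \emph{independently of $\tau$}; inserting $\|u_h-\widehat{u_h}\|_\infty\le\tau\log n$ then yields the $\tau^{1/2}$ rate directly. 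To repair your argument you would need to replace the descent-lemma upper bound on $\widehat{u_h}$ by one whose curvature term does not deteriorate with $\tau$, and to accept that the resulting control is in $L^{2}(\mathcal{X})$ rather than pointwise.
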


Theorem \ref{subsec:final result} thus proves that the $L^{2}$ norm of the difference between the OT map $\nabla u_{h}(x)$ and the smoothed OT map $\nabla \widehat{u_{h}}(x;\tau)$ is bounded by a decreasing function of the error bound $\tau$, and hence by a decreasing function of $\epsilon$, as shown later by substituting $\tau=\frac{\epsilon}{\log n}$.

% In this section, we prove a result regarding difference between the true and the smooth approximate OT map constructed in the convex smoothed AE-OT model, as discussed in the previous sections. Our objective is to obtain a bound on the amount of error we make on using the smoothed approximation $\widehat{u_{h}}(.)$ instead of the true Brenier potential $u_{h}(.)$.
In order to prove Theorem \ref{subsec:final result}, we require the following result (Proposition 3.7) from \citet{Gradientdiffnorm}:
\begin{proposition}\label{sec:Proposition for final result}
 Let $f$ and $g$ be convex functions on a bounded convex set $\mathcal{X},$ then
\begin{equation*}\label{result}
\|\nabla f-\nabla g\|_{\mathrm{L}^{2}} \leq 2 C_{\mathcal{X}}\|f-g\|_{\infty}^{1 / 2}\left(\|\nabla f\|_{\infty}^{1 / 2}+\|\nabla g\|_{\infty}^{1 / 2}\right)
\end{equation*}
where $C_{\mathcal{X}}$ depends only on $\mathcal{X}$.
\end{proposition}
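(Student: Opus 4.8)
Since the bound is vacuous when $\|\nabla f\|_\infty$ or $\|\nabla g\|_\infty$ is infinite, I would assume both are finite, so that $f$ and $g$ are Lipschitz on $\mathcal{X}$ (hence continuous up to $\partial\mathcal{X}$) and, being convex, differentiable almost everywhere; every quantity in the statement is then well defined. The plan is to establish the estimate first for $C^\infty$ convex functions through Green's first identity, and then to remove the smoothness assumption by mollification.

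In the smooth case, write $h=f-g$ and apply Green's first identity on the bounded convex (hence Lipschitz) domain $\mathcal{X}$:
\[
\int_{\mathcal{X}}|\nabla h|^2\,dx=\int_{\partial\mathcal{X}}h\,\partial_n h\,d\sigma-\int_{\mathcal{X}}h\,\Delta h\,dx .
\]
The only role of convexity of $f$ and $g$ is the pointwise inequalities $\Delta f=\mathrm{tr}(D^2 f)\ge 0$ and $\Delta g\ge 0$, which give $|\Delta h|\le\Delta f+\Delta g$; applying the divergence theorem to $\nabla f$ and to $\nabla g$ separately then yields
\[
\int_{\mathcal{X}}|\Delta h|\,dx\le\int_{\partial\mathcal{X}}\nabla f\cdot n\,d\sigma+\int_{\partial\mathcal{X}}\nabla g\cdot n\,d\sigma\le|\partial\mathcal{X}|\bigl(\|\nabla f\|_\infty+\|\nabla g\|_\infty\bigr),
\]
where $|\partial\mathcal{X}|$ is the surface measure of $\partial\mathcal{X}$. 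I would then bound $\bigl|\int_{\mathcal{X}}h\,\Delta h\bigr|\le\|f-g\|_\infty\int_{\mathcal{X}}|\Delta h|$ and $\bigl|\int_{\partial\mathcal{X}}h\,\partial_n h\,d\sigma\bigr|\le\|f-g\|_\infty\,\|\nabla h\|_\infty\,|\partial\mathcal{X}|$, and, using $\|\nabla h\|_\infty\le\|\nabla f\|_\infty+\|\nabla g\|_\infty$, combine to obtain
\[
\|\nabla f-\nabla g\|_{L^2}^2\le 2\,|\partial\mathcal{X}|\,\|f-g\|_\infty\bigl(\|\nabla f\|_\infty+\|\nabla g\|_\infty\bigr).
\]
Taking square roots and applying $\sqrt{a+b}\le\sqrt a+\sqrt b$ gives the asserted inequality with $C_{\mathcal{X}}=\sqrt{|\partial\mathcal{X}|/2}$.

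For general convex $f$ and $g$, I would pass to the inner parallel bodies $\mathcal{X}_\delta=\{x:\mathrm{dist}(x,\mathbb{R}^d\setminus\mathcal{X})>\delta\}$ (again bounded and convex) and mollify, setting $f_\delta=f*\varphi_\delta$, $g_\delta=g*\varphi_\delta$; these are convex and $C^\infty$ on $\mathcal{X}_\delta$, with $\|f_\delta-g_\delta\|_\infty\le\|f-g\|_\infty$ and $\|\nabla f_\delta\|_\infty\le\|\nabla f\|_\infty$ (likewise for $g$). Applying the smooth-case bound on $\mathcal{X}_\delta$ and letting $\delta\to 0$ concludes: on the left, $\nabla f_\delta\to\nabla f$ and $\nabla g_\delta\to\nabla g$ almost everywhere with integrands dominated by $(\|\nabla f\|_\infty+\|\nabla g\|_\infty)^2$, so $\int_{\mathcal{X}_\delta}|\nabla f_\delta-\nabla g_\delta|^2\to\int_{\mathcal{X}}|\nabla f-\nabla g|^2$ by dominated convergence; on the right, $|\partial\mathcal{X}_\delta|\le|\partial\mathcal{X}|$ since the perimeter of a convex body is monotone under inclusion.

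The routine parts are the mollification bounds and the elementary scalar inequalities. The step I expect to be the main obstacle is the limit $\delta\to 0$: securing a.e.\ convergence of the mollified gradients together with a uniform $L^\infty$ bound so that dominated convergence applies, and ensuring the boundary integrals on $\partial\mathcal{X}_\delta$ stay controlled by $|\partial\mathcal{X}|$. This is precisely the point at which convexity of the \emph{domain} $\mathcal{X}$ (beyond that of $f$ and $g$) is used: it guarantees a Lipschitz boundary with a well-defined outer normal, so that Green's identity and the divergence theorem are valid, and it makes the inner-parallel bodies well behaved with surface measure no larger than that of $\mathcal{X}$.
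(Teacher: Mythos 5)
The paper does not prove this proposition; it is imported verbatim as Proposition 3.7 of \citet{Gradientdiffnorm}, so there is no in-paper argument to compare against. Your proof is correct and, as far as I can tell, reproduces the standard argument behind such stability estimates for gradients of convex functions: the integration-by-parts identity $\int_{\mathcal X}|\nabla h|^2 = \int_{\partial\mathcal X} h\,\partial_n h - \int_{\mathcal X} h\,\Delta h$, the sign information $\Delta f,\Delta g\ge 0$ to control $\int|\Delta h|$ by a boundary flux, and a mollification to remove the smoothness hypothesis. The one place to be slightly more careful than your sketch suggests is that for non-smooth convex $f$ the Laplacian $\Delta f$ is in general only a non-negative Radon measure, so the pointwise manipulation $|\Delta h|\le\Delta f+\Delta g$ should be read in the sense of measures in the pre-mollified picture — but since you carry out the estimate only for the mollified (hence $C^\infty$) functions and pass to the limit afterwards with the uniform $L^\infty$ bound on gradients, that issue never actually arises, and your $\delta\to 0$ argument via dominated convergence on exhausting compacta together with $|\partial\mathcal X_\delta|\le|\partial\mathcal X|$ for nested convex bodies is sound.
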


We now proceed to prove Theorem \ref{subsec:final result}.

\begin{proof}[Proof of Theorem \ref{subsec:final result}]
We observe that both $u_{h}(\mathbf{x})$ and $\widehat{u_{h}}(\mathbf{x};\tau)$ ($\tau>0$) are convex functions. Convexity of $u_{h}(\mathbf{x})$ follows from the fact that $u_{h}(\mathbf{x})$ is a piecewise linear function and the convexity of $\widehat{u_{h}}(\mathbf{x};\tau)$ ($\tau>0$) follows from Section 3 of \citet{Convex}. Further, the domain $\mathcal{X}$ of both $u_{h}(\mathbf{x})$ and $\widehat{u_{h}}(\mathbf{x};\tau)$ ($\tau>0$) is assumed to be convex and bounded. Hence, if we choose $f=u_{h}(\mathbf{x})$ and $g=\widehat{u_{h}}(\mathbf{x};\tau)$, the conditions for applying Proposition \ref{sec:Proposition for final result} are satisfied.

Now, the gradient of $u_{h}(\mathbf{x})$ is given by $$\nabla u_{h}(\mathbf{x}) = y_{m}$$ where $m \in \left\{1,2,\ldots,n\right\}$ is such that $\mathbf{x}^{T}\mathbf{y}_{i}+h_{i}$ is maximized over all $i \in \left\{1,2,\dots,n\right\}$ when $i=m$.

Again, the gradient of $\widehat{u_{h}}(\mathbf{x};\tau)$ is given by
$$
\nabla \widehat{u_{h}}(\mathbf{x};\tau) = \left(\frac{\sum_{i=1}^{n}y_{i1}\exp\hspace{2pt}c_{i}}{\sum_{i=1}^{n}\exp c_{i}},\frac{\sum_{i=1}^{n}y_{i2}\exp c_{i}}{\sum_{i=1}^{n}\exp c_{i}},\dots,\frac{\sum_{i=1}^{n}y_{id}\exp c_{i}}{\sum_{i=1}^{n}\exp c_{i}}\right)
$$ where $\mathbf{y}_{i}=\left(y_{i1},y_{i2},\dots,y_{id}\right)$ is the $i$-th training latent code i.e. the encoding of the $i$-th training data point in the latent space $\mathcal{Z}$. Observe that, for any fixed $\mathbf{x}$, $$ \|\nabla u_{h}(\mathbf{x})\|_{\infty} = \underset{i=1,2,\dots,d}{\max}\left|y_{mi}\right|$$ where $m \in \left\{1,2,\ldots,n\right\}$ is such that $\mathbf{x}^{T}\mathbf{y}_{i}+h_{i}$ is maximized over all $i \in \left\{1,2,\dots,n\right\}$ when $i=m$. Given the training dataset and after the autoencoder has been trained, this is a non-negative constant independent of $\tau$ (and hence $\epsilon$), say $k_{1}$. Further, we have that $k_{1}\leq \underset{i,j=1,2,\dots,d}{\max}\left|y_{ij}\right|=k$, say.

For any fixed $\mathbf{x}$ and $\epsilon$ (hence fixed $\tau$), we have that $$ \|\nabla \widehat{u_{h}}(\mathbf{x;\tau})\|_{\infty} = \underset{k=1,2,\dots,d}{\max}\left|\frac{\sum_{i=1}^{n}y_{ik}\exp c_{i}}{\sum_{i=1}^{n}\exp c_{i}}\right|$$ We note that, for any k, $\frac{\sum_{i=1}^{n}y_{ik}\exp c_{i}}{\sum_{i=1}^{n}\exp c_{i}}$, is a weighted average of $y_{ik}$'s with non-negative weights, and hence must satisfy$$m_{k}=\underset{i=1,2,\dots,d}{min}y_{ik}\leq \frac{\sum_{i=1}^{n}y_{ik}\exp c_{i}}{\sum_{i=1}^{n}\exp c_{i}}\leq \underset{i=1,2,\dots,d}{\max}y_{ik}=M_{k}$$ Then $ \|\nabla \widehat{u_{h}}(\mathbf{x;\tau})\|_{\infty}$ is less than or equal to maximum over all the $\left|m_{k}\right|$'s and $\left|M_{k}\right|$'s, taken together, say $k_{2}$. Given the training dataset and after the autoencoder has been trained, this is a non-negative constant independent of $\tau$ (and hence $\epsilon$). In particular, we observe that $k_{2}\leq \underset{i,j=1,2,\dots,d}{\max}\left|y_{ij}\right|=k$.Thus we obtain that $$\|\nabla u_{h}(\mathbf{x})\|_{\infty}^{1\slash2}+\|\nabla \widehat{u_{h}}(\mathbf{x})\|_{\infty}^{1\slash2}\leq k_{1}^{1\slash2} + k_{2}^{1\slash2}\leq 2k^{1\slash2}$$

Following Lemma 2 in Section 3 of \citet{Convex}, using the inequality, we have that $$u_{h}(\mathbf{x})\geq \nabla \widehat{u_{h}}(\mathbf{x;\tau})$$ for any $\tau \geq 0$.
From Equation \eqref{error bound}, which is a restatement of Equation (21) of \citet{Convex}, we have that $$\|u_{h}(\mathbf{x})-\widehat{u_{h}}(\mathbf{x})\|_{\infty}=\sup _{\mathbf{x}}\left|u_{h}(\mathbf{x})-\widehat{u_{h}}(\mathbf{x};\tau)\right| \leq \tau \log n$$
Based on the above observations and using Proposition \ref{sec:Proposition for final result}, we have, for any $\mathbf{x} \in \mathcal{X}$,
\begin{equation*}\label{final result}
\|\nabla u_{h}(\mathbf{x})-\nabla \widehat{u_{h}}(\mathbf{x})\|_{\mathrm{L}^{2}} \leq 2 C_{\mathcal{X}}\left(\tau \log n\right)^{1 / 2}\times 2k^{1 / 2}=K \times \left(\log n\right)^{1 / 2} \times \tau^{1 / 2}
\end{equation*}
where $C_{\mathcal{X}}$ depends only on $\mathcal{X}$ and is independent of $\tau$ (and hence $\epsilon$), and $K=4C_{\mathcal{X}}k^{1 / 2}$ is a constant independent of $\tau$ (and hence $\epsilon$).

Substituting $\tau=\frac{\epsilon}{log\hspace{2pt}n}$, we have, for any $\mathbf{x} \in \mathcal{X}$
\begin{equation*}\label{last result}
\|\nabla u_{h}(\mathbf{x})-\nabla \widehat{u_{h}}(\mathbf{x})\|_{\mathrm{L}^{2}} \leq \frac{K}{\left(\log n\right)^{1 / 2}} \times \epsilon^{1 / 2}.
\end{equation*}

\end{proof}

Thus as $\epsilon$ decreases, the $L^{2}$ norm of the difference in gradients of the true Brenier potential and the smoothed approximate Brenier potential i.e. the $L^{2}$ norm of the difference between the true OT map $T=\nabla u_{h}(.)$ and the approximate OT map $\hat{T} = \nabla \hat{u_{h}}(.)$ becomes smaller. This implies that for a sufficiently small value of the error bound $\epsilon$, not only the Brenier potentials, but the OT maps themselves becomes closer to each other. However, the important property of continuity of the smoothed OT map is preserved as long as $\epsilon>0$. The choice of $\mathcal{X}$ in Theorem \ref{subsec:final result} is immaterial as long as it is compact and bounded. In particular, we have assumed the noise distribution $\mu$ to be a $d$-dimensional uniform distribution with support $\left[-1,1\right]^{d}$, and hence the theorem applies to the scenario we are interested in the Convex Smoothed AE-OT model.

\section{Experimental results}\label{sec:Experimental Results}
\label{sec:Experimental results}

To validate that our proposed algorithm works in practice, we conduct a series of experiments. We want to study whether our proposed algorithm is able to deal with the problems of mode collapse and mode mixture, and generate high quality samples closely resembling the observed data, with good generalization power i.e. not reconstructing the training data exactly.

First, we compare the performance of both the AE-OT model and our proposed algorithm on two toy datasets consisting of 2-dimensional data points, 2D-ring and 2D-grid, consisting of observations simulated from a mixture of 8-Gaussian and 25-Gaussian distributions, respectively, following the authors of \citet{AEOT}. Descriptions of these datasets are given in the Appendix along with other relevant details regarding the training of the AE-OT model and our proposed model. These are ideal datasets for testing the relative performance of the two models, since both the datasets are multimodal in nature. Since these are 2-dimensional datasets, it does not make sense to embed the data in a latent space and then decode the latent codes to generate new samples i.e. it is not necessary to use an autoencoder. Instead we are able to generate samples directly in this case.

\subsection{2-D Toy Datasets}

\subsubsection{Convex Smoothed AE-OT algorithm performance for optimal $\epsilon$}

We report the results of applying the Convex Smoothed AE-OT on the 2-dimensional datasets 8Gaussian and 25Gaussian.

\begin{itemize}
    \item For initially testing out ideas, we ran simulations on 2-D examples similar to what we have done for AE-OT, following Section 3.2 of the paper \citet{Convex} using the entropy prox function (since it has a closed form for $\hat{u_{h}}(x)$ and more importantly for its gradient, thus requiring no additional computational expense).
    \item Initial experiments show that the uniform bound on error made in approximating $u_{h}(x)$ by $\hat{u_{h}}(x)$, denoted by $\epsilon$, can be used to specify how closely we want $u_{h}(x)$ to be approximated. There is a very simple dependence of this uniform error bound on the regularization hyperparameter $\tau$ used in the approximation. For the entropy prox function the choice $\tau=\frac{\epsilon}{\log n}$ (n is the number of observed latent vectors) yields a uniform error bound of $\epsilon$.
    \item It is observed that mode collapse does not occur for any value of $\epsilon$, and the $\epsilon$ value is inversely related to the degree of mode mixture in the generated samples.
    \item For sufficiently small choices of $\epsilon$ (in the order of $10^{-4}$ or less), we observe that the generated samples cover all the modes of the observed data i.e. there is no mode collapse, and there is no mode mixture also. For larger values of $\epsilon$, with the lowering of the accuracy of approximation, mode mixture occurs. So we observe that at least for these 2 datasets, the proposed modification of the AE-OT methodology works very well.
    \item We use the proposed procedure of choosing the optimal $\epsilon$ value based on the MMD test, obtaining the \say{best} choice of $\epsilon$ for these 2 datasets to be in the order of $10^{-1}$. Since we observe that mode collapse does not occur for any value of $\epsilon$, we can view the procedure of choosing the optimal $\epsilon$ value as a procedure to decide what constitutes an acceptable level of mode mixture in the generated samples for the dataset at hand.
    
    % as allowing an acceptable level of mode mixture to occur.
\end{itemize}

% We display the results obtained corresponding to all the choices of $\epsilon$ we experimented with for both the datasets, namely $10^{-6}$ to $10^{-2}$, incrementing by a factor of 10.
% We obtained results for both the datasets varying $\epsilon$ from $10^{-6}$ to $10^{-2}$, incrementing by a factor of 10. We display results for all choices except for $10^{-4}$ to save space.

We obtained results for both the datasets varying $\epsilon$ from $10^{-6}$ to $100$, incrementing by a factor of 10. In addition, while choosing the optimal $\epsilon$ using the MMD test, we obtain generate samples corresponding to additional $\epsilon$ values as dictated by Algorithm \ref{Alg. eps}.

Results for the optimal choice of $\epsilon$ for the 8-Gaussian and 25-Gaussian datasets are displayed here, while those corresponding to $\epsilon$ values $10^{-6}$, $10^{-5}$, $10^{-3}$ and $10^{-2}$ are given in the Appendix (\ref{Appendix:Convex_AE-OT_2d}).

Since there are $n=256$ training samples, we generate an equal number of samples using the Convex Smooth AE-OT model, and perform the two-sample permutation test for equality of observed sample distribution and generated sample distribution
based on the MMD test statistic, using 1000 permutations in case of both the datasets. We choose the significance level to be $\alpha=0.05$ in each case and declare an $\epsilon$ value as optimal if it is within $\alpha+\pm \delta$ where we set $\delta=0.01$.

Following this procedure, the optimal choice of $\epsilon$ for the 8-Gaussian dataset based on the two sample MMD test is obtained to be 0.6, while that for the 25-Gaussian dataset is obtained to be 0.8. The corresponding p-values based on the permutation tests were 0.054 in both cases.

The generated samples together with the observed data are as follows:

\begin{figure}[H]
\begin{subfigure}{0.8\textwidth}
\includegraphics[width=\textwidth]{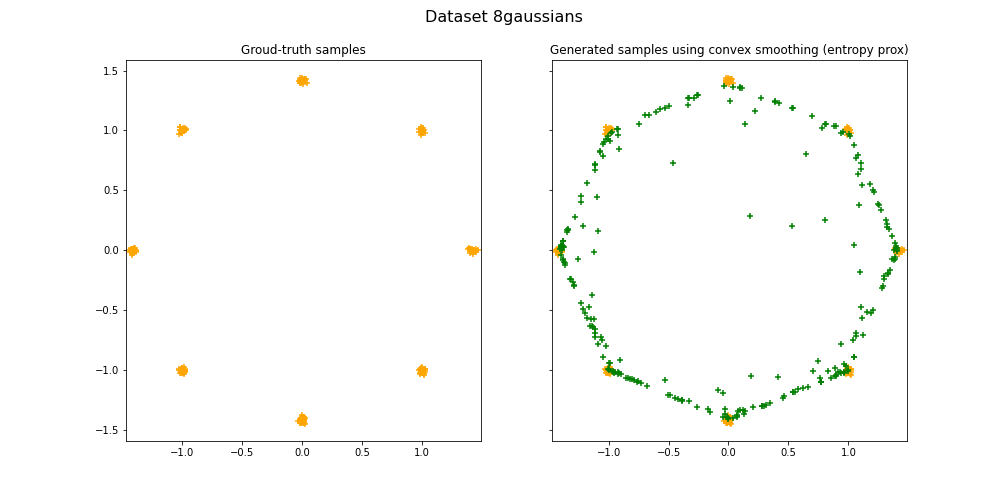}
\caption{8Gaussians Dataset $\epsilon=0.6$}
\label{8gaussianeps0.6}
\end{subfigure}
\begin{subfigure}{0.8\textwidth}
\includegraphics[width=\textwidth]{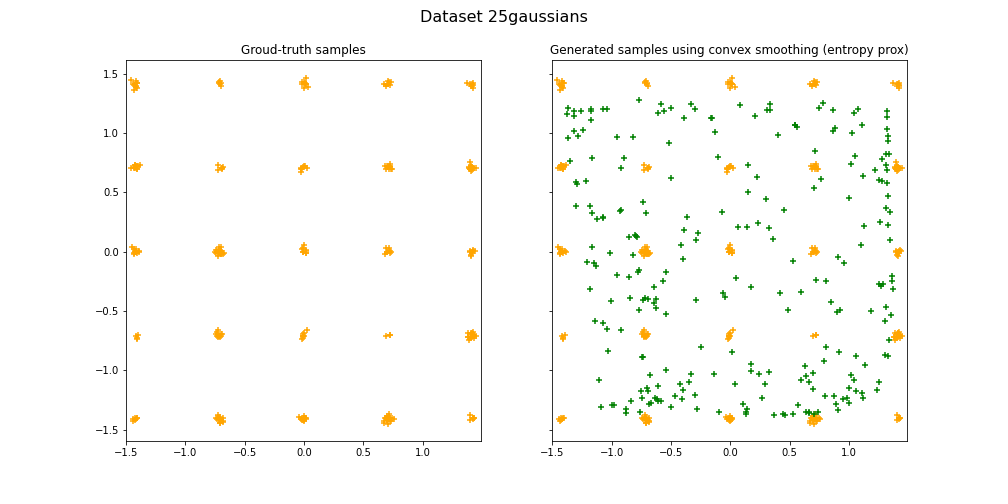}
\caption{25Gaussians Dataset $\epsilon=0.8$}
\label{25gaussianeps0.8}
\end{subfigure}
\end{figure}

\textbf{Comments:} We observe that the generated samples cover all the modes of the observed data and hence the phenomenon of mode collapse is mitigated here. Further, all the generated samples are mixtures of two nearest modes and falls close to the approximate manifold defined by the observed data.

\subsubsection{AE-OT results for varying $\theta$}

To appreciate the efficacy of the Convex Smoothed AE-OT model, it is required to compare its performance with that of the AE-OT model itself on the 8-Gaussian and 25-Gaussian datasets. We provide the results obtained using the AE-OT model along with the relevant discussions in the Appendix (\ref{Appendix:AE-OT_2d}).

\section{Conclusion}\label{sec:Conclusion}
% As seen in the Experimental results section, our proposed generative model- Modified AE-OT, produces affirmative results. We tried a different autoencoder architecture for the MNIST dataset (Model 2 for MNIST data given in the Appendix) and obtained quite similar results. This leads us to the conclusion that our proposed model is able to learn a distribution for the latent codes and the data itself similar to the distributions learnt by the autoencoder used to define the model. The intuition that the approximated OT map $\hat{T}$ is quite close to the true OT map $T$ is validated by the experimental results, opening the avenue for providing a theoretical justification for this observation in the future.
% % and its image generation quality could easily be compared  the autoencoders.
% We also concluded that the original AE-OT model performs very poorly on our autoencoder structure, which shows that the results of (\citet{AEOT}) is probably dependent on the availability of a high quality autoencoder and also on the particular network architecture used in the study. We will apply our model to more complicated image datasets, in particular color images, so as to observe whether the results obtained here also apply to them.

As seen in the Experimental results section (Section \ref{sec:Experimental results}), our proposed generative model - Convex Smoothed AE-OT, produces affirmative results. We improve upon the original AE-OT model (\citet{AEOT}) with regards to its sample generation algorithm, while ensuring that mode collapse is absent and mode mixture is present only upto an allowable level in the generated samples. In addition to empirically validating the efficacy of the proposed model, we provide a theoretical justification for the approximated OT map $\hat{T}$ for being close to the true OT map $T$. Our current efforts are aimed at applying the Convex Smoothed AE-OT model to benchmark Image datasets and evaluating its performance.  

% \section{Conclusion}\label{sec:Conclusion}
% As seen in the Experimental results section, our proposed generative model- Modified AE-OT, produces affirmative results. We tried a different autoencoder architecture for the MNIST dataset (Model 2 for MNIST data given in the Appendix) and obtained quite similar results. This leads us to the conclusion that our proposed model is able to learn a distribution for the latent codes and the data itself similar to the distributions learnt by the autoencoder used to define the model. The intuition that the approximated OT map $\hat{T}$ is quite close to the true OT map $T$ is validated by the experimental results, opening the avenue for providing a theoretical justification for this observation in the future.
% % and its image generation quality could easily be compared  the autoencoders.
% We also concluded that the original AE-OT model performs very poorly on our autoencoder structure, which shows that the results of (\citet{AEOT}) is probably dependent on the availability of a high quality autoencoder and also on the particular network architecture used in the study. We will apply our model to more complicated image datasets, in particular color images, so as to observe whether the results obtained here also apply to them.

\section*{Acknowledgements}
The author is extremely grateful to Prof. Bodhisattva Sen for guiding her in the development of this paper.
%A natural question is: What are practical choices of $d_n$ that facilitates the computation of the optimization problem~\eqref{eq:Opt_Problem}? In this note we study a generalization of the one sample \textit{energy} goodness-of-fit statistics proposed in~\citet{SzekelyRizzo05}.

\bibliographystyle{chicago}
\bibliography{Conv-Smooth-AE-OT}

\section*{Appendix}

\subsection*{Experimental results obtained using Convex Smoothed AE-OT model on 2D Toy Datasets}\makeatletter\def\@currentlabel{Convex Smoothed AE-OT results}\makeatother
\label{Appendix:Convex_AE-OT_2d}

We obtained results for both the datasets varying $\epsilon$ from $10^{-6}$ to $10^{-2}$, incrementing by a factor of 10. We display results for all choices except for $10^{-4}$ to save space.

The results corresponding to the 8-Gaussian dataset is displayed on the left and those corresponding to the 25-Gaussian dataset is displayed on the right. The left subplot in each diagram corresponds to the given dataset of 256 data points (marked in orange), while the right subplot in the diagram contains the generated samples (marked in green) superimposed over the original data points (marked in orange).

\subsubsection*{$\epsilon=10^{-6}$ (no mode collapse/mode mixture but exact reconstruction):}

\begin{figure}[H]
\begin{subfigure}{0.45\textwidth}
\includegraphics[width=\textwidth]{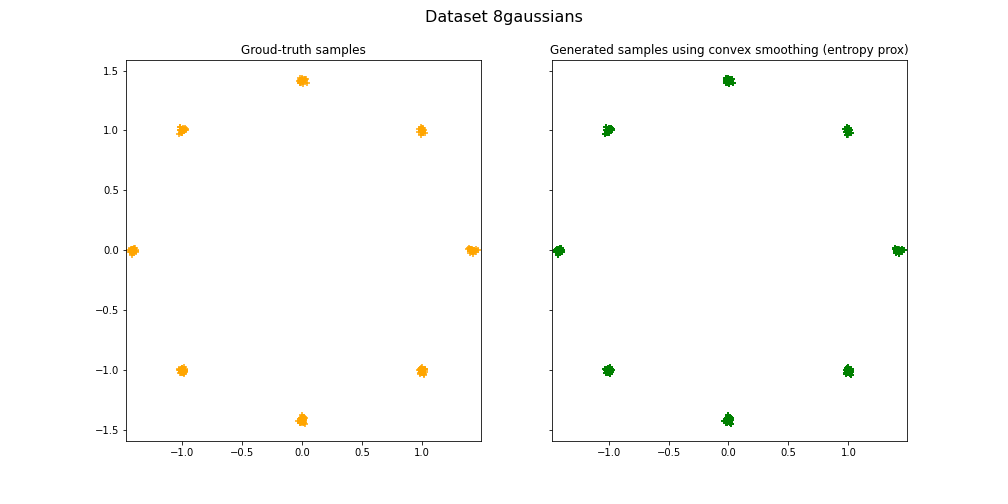}
\caption{8-Gaussians Dataset $\epsilon=10^{-6}$}
\label{8gaussianeps0.000001}
\end{subfigure}
\begin{subfigure}{0.45\textwidth}
\includegraphics[width=\textwidth]{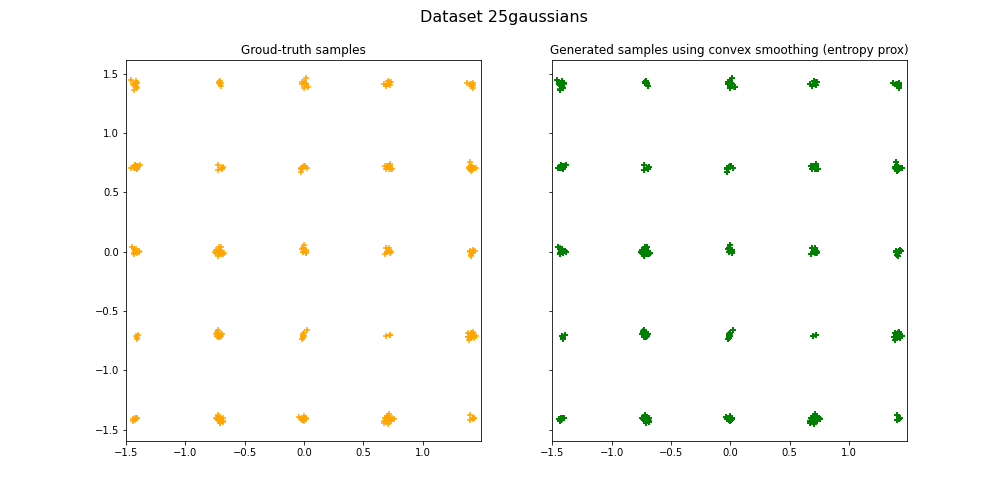}
\caption{25-Gaussians Dataset $\epsilon=10^{-6}$}
\label{25gaussianeps0.000001}
\end{subfigure}
\end{figure}

\subsubsection*{$\epsilon=10^{-5}$:}

\begin{figure}[H]
\begin{subfigure}{0.45\textwidth}
\includegraphics[width=\textwidth]{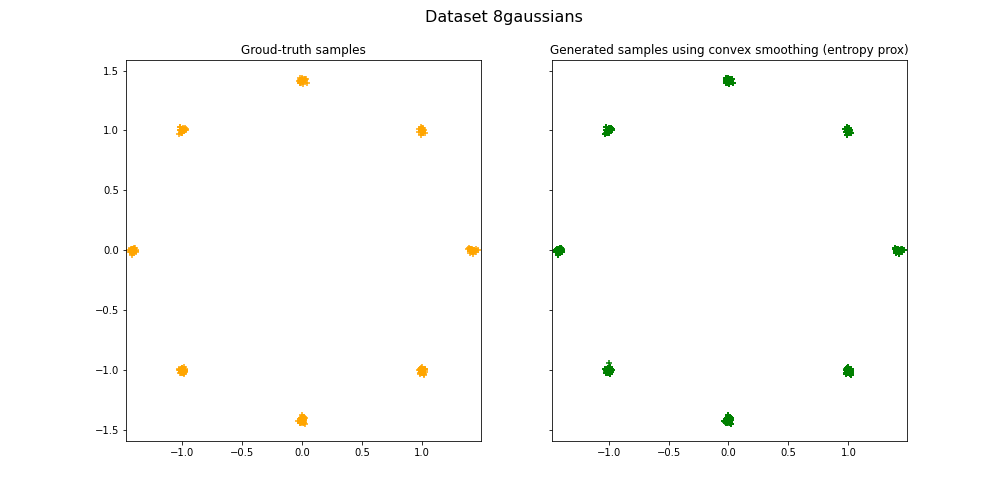}
\caption{8-Gaussians Dataset $\epsilon=10^{-5}$}
\label{8gaussianeps0.00001}
\end{subfigure}
\begin{subfigure}{0.45\textwidth}
\includegraphics[width=\textwidth]{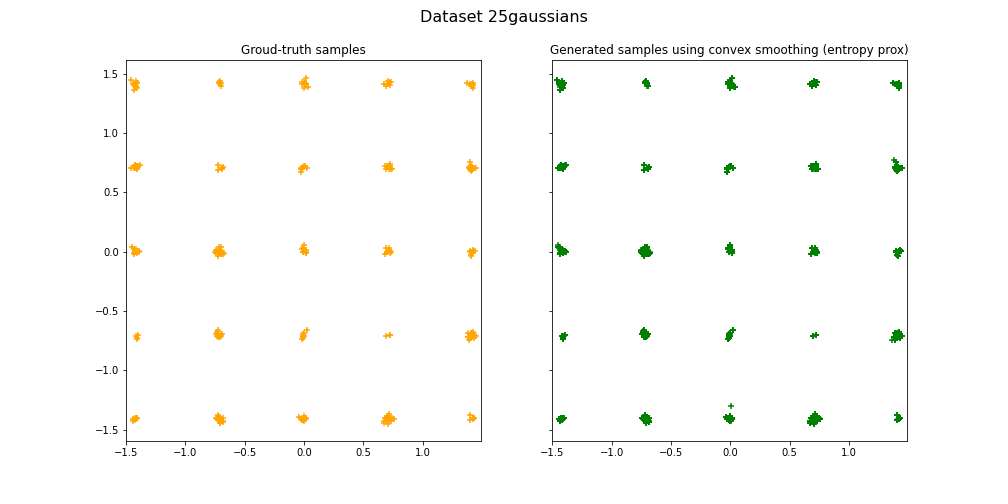}
\caption{25-Gaussians Dataset $\epsilon=10^{-5}$}
\label{25gaussianeps0.00001}
\end{subfigure}
\end{figure}

\subsubsection*{$\epsilon=10^{-3}$:}

\begin{figure}[H]
\begin{subfigure}{0.45\textwidth}
\includegraphics[width=\textwidth]{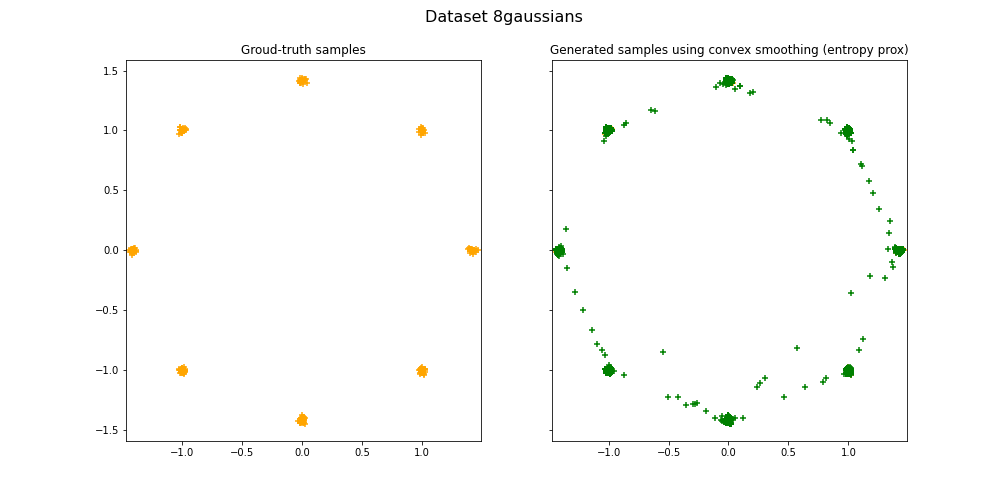}
\caption{8-Gaussians Dataset $\epsilon=10^{-3}$}
\label{8gaussianeps0.001}
\end{subfigure}
\begin{subfigure}{0.45\textwidth}
\includegraphics[width=\textwidth]{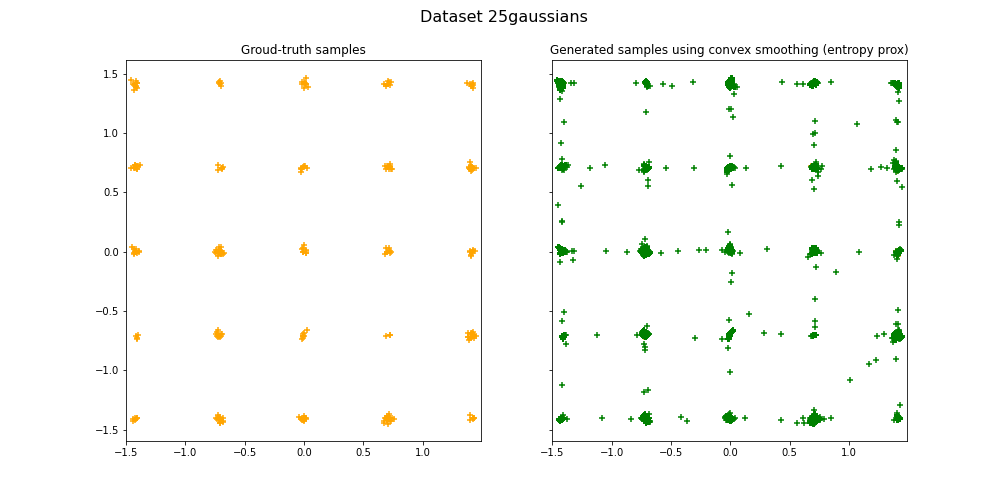}
\caption{25-Gaussians Dataset $\epsilon=10^{-3}$}
\label{25gaussianeps0.001}
\end{subfigure}
\end{figure}

\subsubsection*{$\epsilon=10^{-2}$:}

\begin{figure}[H]
\begin{subfigure}{0.45\textwidth}
\includegraphics[width=\textwidth]{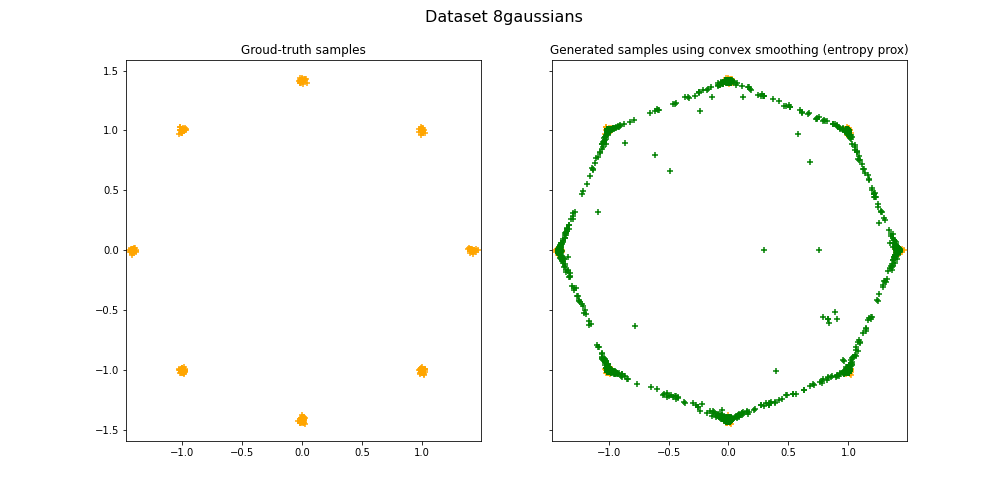}
\caption{8-Gaussians Dataset $\epsilon=10^{-2}$}
\label{8gaussianeps0.01}
\end{subfigure}
\begin{subfigure}{0.45\textwidth}
\includegraphics[width=\textwidth]{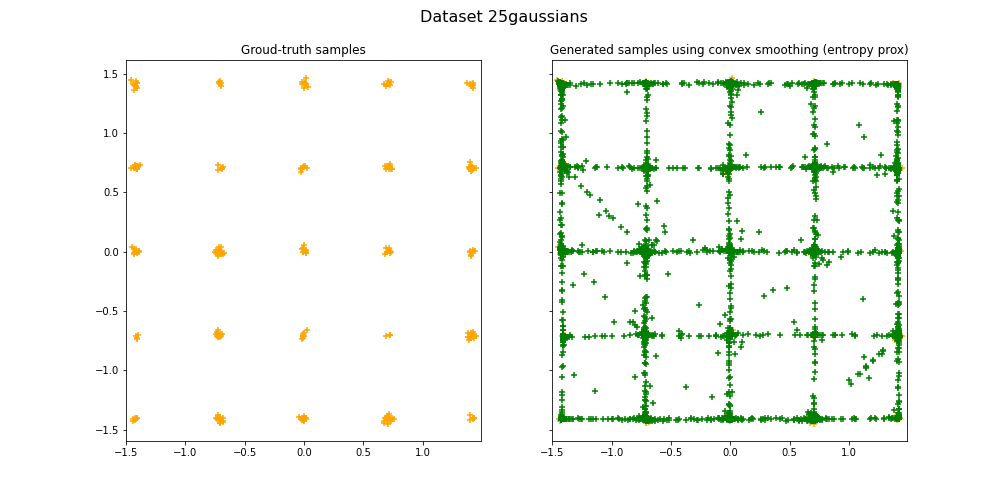}
\caption{25-Gaussians Dataset $\epsilon=10^{-2}$}
\label{25gaussianeps0.01}
\end{subfigure}
\end{figure}

\textbf{Comments:} As we are decreasing $\epsilon$, the accuracy of approximation is increasing and we can visually observe the increase in quality of the generated samples, with mode mixture vanishing for smaller values of $\epsilon$. For no choice of $\epsilon$ do we observe the phenomenon of mode collapse.

\subsection*{Experimental Results obtained using original AE-OT model on 2D Toy Datasets}\makeatletter\def\@currentlabel{AE-OT results}\makeatother
\label{Appendix:AE-OT_2d}

We report in detail about the best performing hyperparameter choice, and visually show the results for other choices of hyperparameters. The important hyperparameter that determines the efficacy of AE-OT in mitigating the mode-collapse/mixture problem is the threshold $\hat{\theta}$ (we try to estimate a good value for $\theta$) set for the dihedral angle between the hyperplanes of the Brenier potential function $u_{h}$ for generating samples using the AE-OT model.

For the 8-Gaussian and 25-Gaussian datasets, we found the best learning rate $\alpha$ for the Adam algorithm used to minimize the convex energy function E to be about 0.0002 and 0.001 respectively. For the 8 Gaussian dataset and the chosen setting of the hyperparameters, the algorithm converged in about 6500 iterations under 6 minutes on the Google Colab GPU Platform. For the 25 Gaussian dataset and the chosen setting of the hyperparameters, the algorithm converged in about 4000 iterations under 4 minutes on the Google Colab GPU Platform. We found the best performing value of $\hat{\theta}$ to be 0.4 and 0.2, respectively for the 2 datasets. For the 8-Gaussian dataset, we tested for a few random values of $\hat{\theta}$ such as 0.001,0.01, 0.2, 0.8 and 1 to test the sensitivity of the results with respect to $\hat{\theta}$. We found that mode collapse happens at $\hat{\theta}=0.001$ and mode mixture happens when $\hat{\theta} \geq 0.8$, while all the modes are covered and no mode mixture happens when $0.01 \leq \hat{\theta} \leq 0.4$. For the 25-Gaussian dataset, we tested for a few random values of $\hat{\theta}$ such as 0.005,0.01,0.1, 0.2, 0.7 and 1 to test the sensitivity of the results with respect to $\hat{\theta}$. We found that mode collapse happens at $\hat{\theta}=0.005$ (severe) and $\hat{\theta}=0.01$ (moderate). Mode mixture happens when $\hat{\theta} \geq 0.7$, while all the modes are covered and no mode mixture happens when $0.1 \leq \hat{\theta} \leq 0.2$. 

We display the results obtained corresponding to the best performing choice of $\hat{\theta}$, close to best choice and choices leading to mode collapse or mode mixture. The arrangement and description of the plots are the same as in the previous section.

\subsubsection*{Best performance:}

\begin{figure}[H]
\begin{subfigure}{0.45\textwidth}
\includegraphics[width=\textwidth]{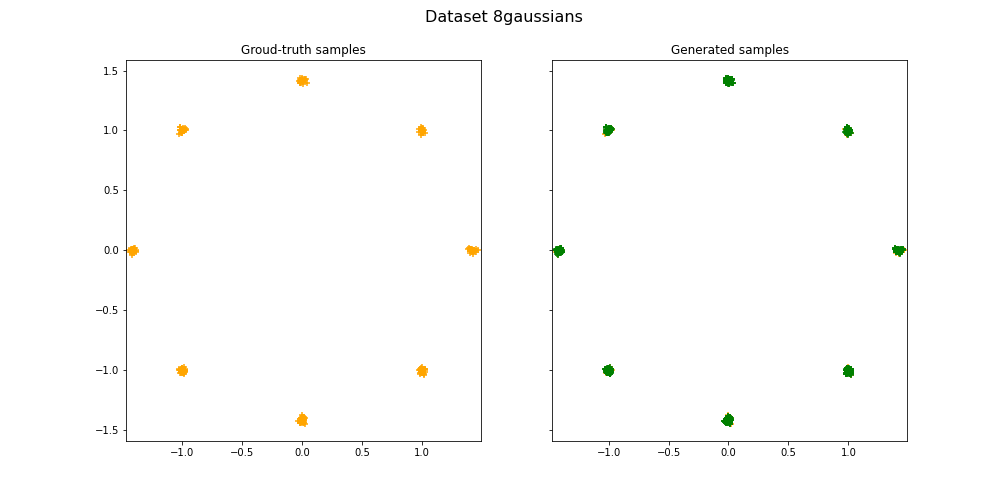}
\caption{8-Gaussians Dataset $\hat{\theta}=0.4$}
\label{8gaussian0.4}
\end{subfigure}
\begin{subfigure}{0.45\textwidth}
\includegraphics[width=\textwidth]{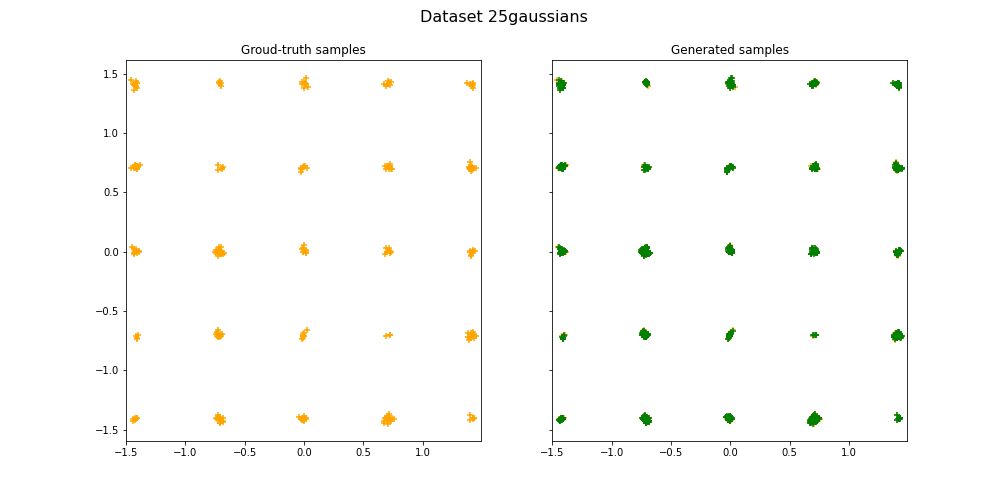}
\caption{25-Gaussians Dataset $\hat{\theta}=0.2$}
\label{25gaussian0.2}
\end{subfigure}
\end{figure}

\subsubsection*{Close to best:}

\begin{figure}[H]
\begin{subfigure}{0.45\textwidth}
\includegraphics[width=\textwidth]{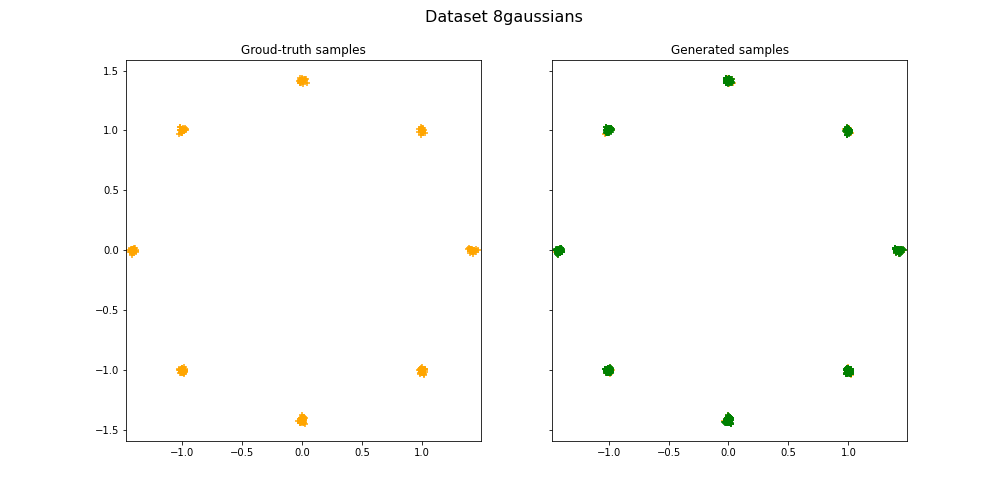}
\caption{8-Gaussians Dataset $\hat{\theta}=0.1$}
\label{8gaussian0.1}
\end{subfigure}
\begin{subfigure}{0.45\textwidth}
\includegraphics[width=\textwidth]{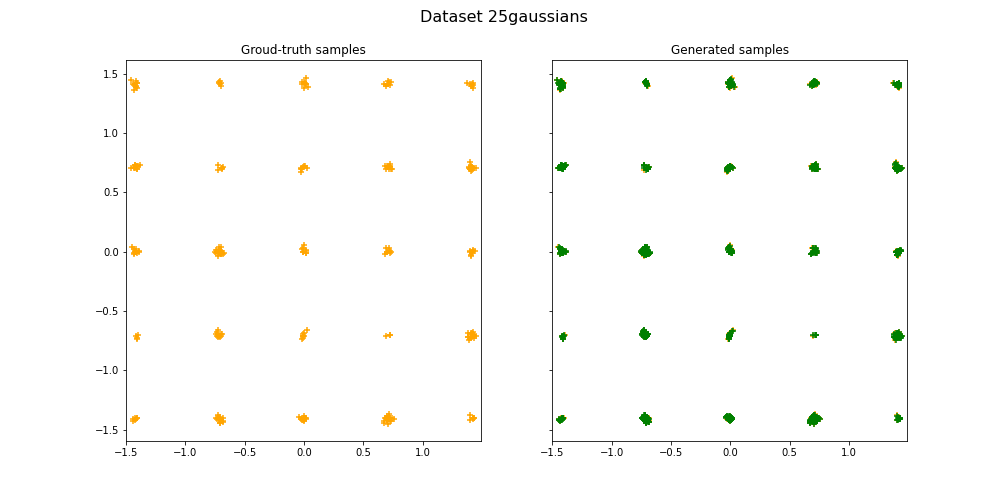}
\caption{25-Gaussians Dataset $\hat{\theta}=0.1$}
\label{25gaussian0.1}
\end{subfigure}
\end{figure}

\subsubsection*{Mode collapse:}

\begin{figure}[H]
\begin{subfigure}{0.45\textwidth}
\includegraphics[width=\textwidth]{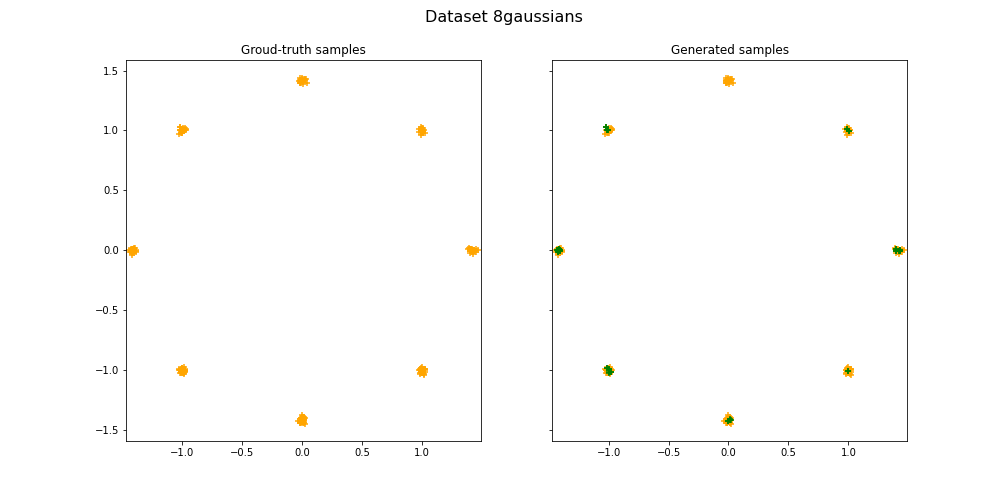}
\caption{8-Gaussians Dataset $\hat{\theta}=0.001$}
\label{8gaussian0.001}
\end{subfigure}
\begin{subfigure}{0.45\textwidth}
\includegraphics[width=\textwidth]{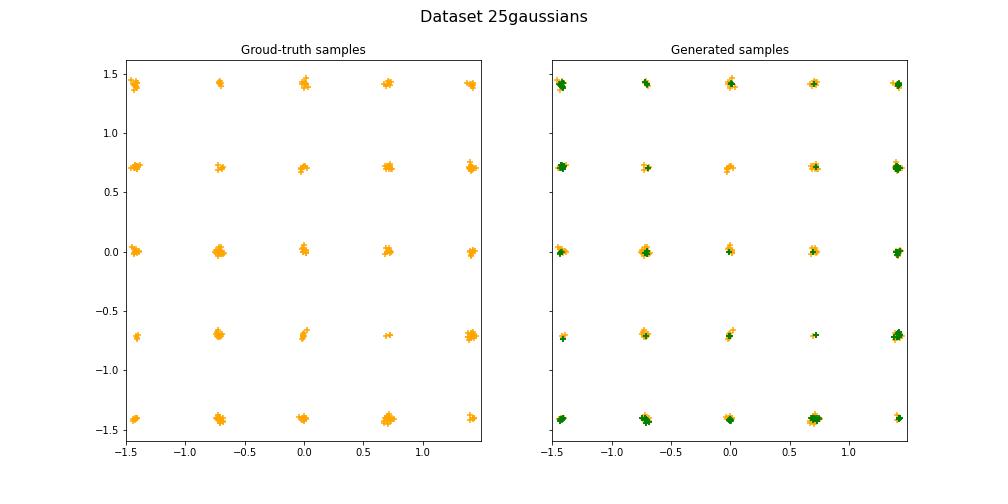}
\caption{25-Gaussians Dataset $\hat{\theta}=0.005$}
\label{25gaussian0.005}
\end{subfigure}
\end{figure}

\subsubsection*{Mode mixture:}

\begin{figure}[H]
\begin{subfigure}{0.45\textwidth}
\includegraphics[width=\textwidth]{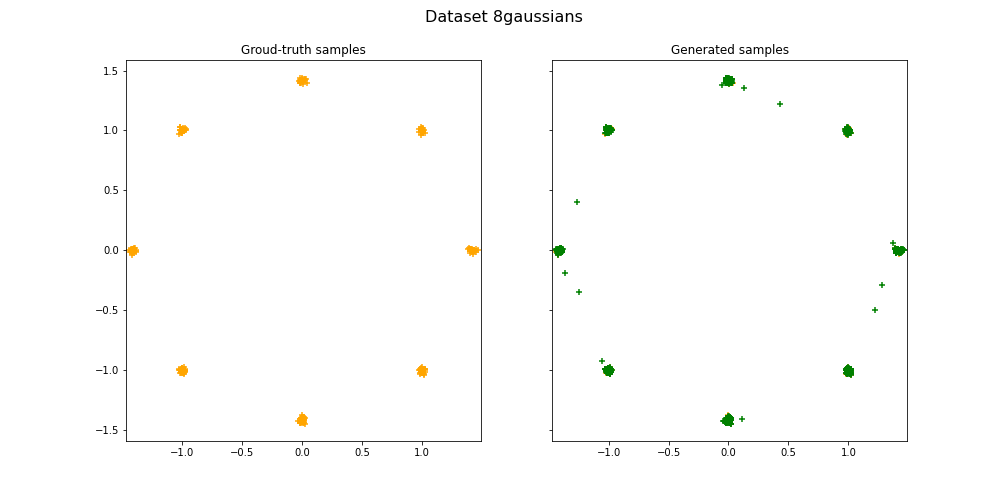}
\caption{8-Gaussians Dataset $\hat{\theta}=1$}
\label{8gaussian1}
\end{subfigure}
\begin{subfigure}{0.45\textwidth}
\includegraphics[width=\textwidth]{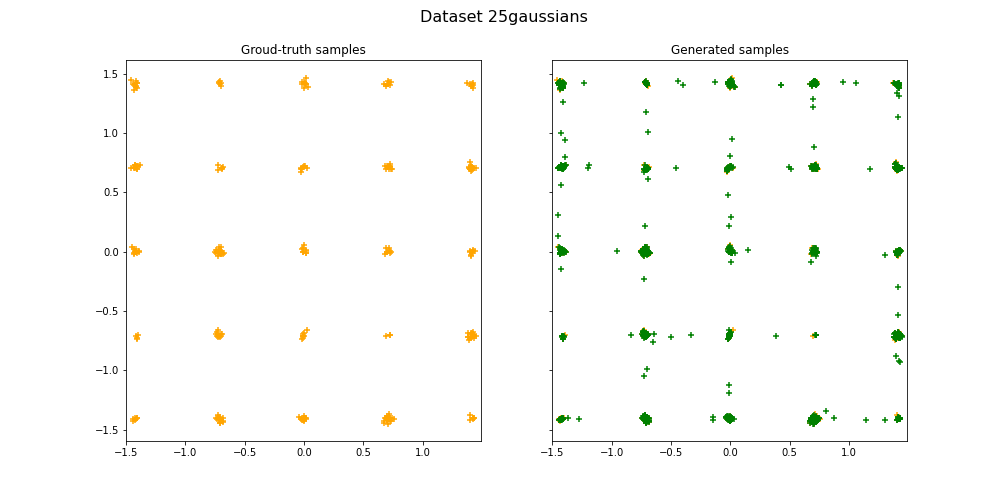}
\caption{25-Gaussians Dataset $\hat{\theta}=0.7$}
\label{25gaussian0.7}
\end{subfigure}
\end{figure}

% Format: 4 sections each containing result on 8 and 25Gaussians

\textbf{Comments:} For both the datasets, as the threshold parameter $\hat{\theta}$ is increased, we observe greater degree of mode mixture and lower quality of generated samples. Similarly for too low a value of $\hat{\theta}$, we observe mode collapse.

\end{document}